\newif\ifincludeappendixx
\definecolor{ckeyword}{HTML}{7F0055}
\definecolor{ccomment}{HTML}{3F7F5F}
\definecolor{cstring}{HTML}{2A0099}
\lstdefinestyle{numbers}{
	numbers=left,
	framexleftmargin=20pt,
	numberstyle=\tiny,
	firstnumber=auto,
	numbersep=1em,
	xleftmargin=2em
}
\lstdefinestyle{layout}{
	frame=none,
	captionpos=b,
}
\lstdefinestyle{comment-style}{
	morecomment=[l]//,
	morecomment=[s]{/*}{*/},
	commentstyle={\color{ccomment}\itshape},
}
\lstdefinestyle{string-style}{
	morestring=[b]",%
	morestring=[b]',%
	stringstyle={\color{cstring}},
	showstringspaces=false,%
}
\lstdefinestyle{keyword-style}{
	keywordstyle={\ttfamily\bfseries},
	morekeywords={
		function,
		constructor,
		int,
		bool,
		return,
		returns,
		uint
	},
	morekeywords = [2]{},
	keywordstyle = [2]{\text},
	sensitive=true,
}
\lstdefinestyle{input-encoding}{
	inputencoding=utf8,
	extendedchars=true,
	literate=
	{ℝ}{$\reals$}1%
	{→}{$\rightarrow$}1%
	{α}{$\alpha$}1%
	{β}{$\beta$}1%
	{λ}{$\lambda$}1%
	{θ}{$\theta$}1%
	{ϕ}{$\phi$}1%
}
\lstdefinestyle{escaping}{
	moredelim={**[is][\color{blue}]{\%}{\%}},
	escapechar=|,
	mathescape=true
}
\lstdefinestyle{default-style}{
	basicstyle=\fontencoding{T1}\ttfamily\footnotesize,
	style=numbers,
	style=layout,
	style=comment-style,
	style=string-style,
	style=keyword-style,
	style=input-encoding,
	style=escaping,
	tabsize=2,
	upquote=true
}
\lstdefinelanguage{BASIC}{
	language=C++,
	style=default-style
}[keywords,comments,strings]%
\algnewcommand\LogicalAnd{\textbf{and}}
\algnewcommand\Continue{\textbf{continue}}
\algnewcommand\Not{\textbf{not}}
\algnewcommand\Break{\textbf{break}}
\theoremstyle{plain}
\newtheorem{theorem}{Theorem}[section]
\newtheorem{lemma}[theorem]{Lemma}
\theoremstyle{definition}
\theoremstyle{remark}
\def\eqref#1{equation~\ref{#1}}
\def\1{\bm{1}}
\def\vb{{\bm{b}}}
\def\vf{{\bm{f}}}
\def\vp{{\bm{p}}}
\def\vs{{\bm{s}}}
\def\vv{{\bm{v}}}
\def\vz{{\bm{z}}}
\def\mA{{\bm{A}}}
\def\mB{{\bm{B}}}
\def\mK{{\bm{K}}}
\def\mM{{\bm{M}}}
\def\mQ{{\bm{Q}}}
\def\mV{{\bm{V}}}
\def\mW{{\bm{W}}}
\def\mX{{\bm{X}}}
\def\mY{{\bm{Y}}}
\def\mZ{{\bm{Z}}}
\DeclareMathAlphabet{\mathsfit}{\encodingdefault}{\sfdefault}{m}{sl}
\SetMathAlphabet{\mathsfit}{bold}{\encodingdefault}{\sfdefault}{bx}{n}
\newcommand{\softmax}{\mathrm{softmax}}
\newcommand{\attention}{\mathrm{attention}}
\newcommand{\grad}[1]{\ensuremath{\tfrac{\partial\mathcal{L}}{\partial #1}}}
\DeclareMathOperator{\rank}{rank}
\DeclareMathOperator{\RowSpan}{rowspan}
\DeclareMathOperator{\ColSpan}{colspan}
\DeclareMathOperator{\proj}{proj}
\newcommand{\bc}[1]{\mathcal{#1}}
\renewcommand{\th}{$^{\text{th}}$\xspace}
\newcommand{\tool}{DAGER\xspace}
\newcommand{\toolL}{\textbf{D}iscreteness-Based \textbf{A}ttack on \textbf{G}radients for \textbf{E}xact \textbf{R}ecovery}
\newcommand{\crefrangeconjunction}{--}
\crefname{listing}{Lst.}{listings}
\crefname{line}{Line}{Lines}
\crefname{appendix}{App.}{App.}
\newcommand{\appref}[1]{%
	\ifbool{includeappendix}{\cref{#1}}{the appendix}%
}
\newcommand{\Appref}[1]{%
	\ifbool{includeappendix}{\cref{#1}}{The appendix}%
}
\definecolor{refcolor}{RGB}{23, 120, 108} 
\definecolor{citcolor}{RGB}{23, 120, 18}
\title{DAGER: Exact Gradient Inversion for Large Language Models}
\author{\hspace{-1em}Ivo Petrov$^{* 1}$, Dimitar I. Dimitrov$^{* 1,2}$, Maximilian Baader$^{2}$,\\\textbf{Mark Niklas Müller$^{2,3}$, Martin Vechev$^{1,2}$}\\
	\hspace{-3em}$^{1}$ INSAIT, Sofia University "St. Kliment Ohridski"\hspace{3em}$^{2}$ ETH Zurich\hspace{3em}$^{3}$ LogicStar.ai\hfil\\
	\texttt{\{ivo.petrov, dimitar.iliev.dimitrov\}@insait.ai} $^{1}$\hfil\\
	\texttt{\{mbaader, mark.mueller, martin.vechev\}@inf.ethz.ch} $^{2}$\hfil\\
}
\let\svthefootnote\thefootnote
\begin{document}
\let\thefootnote\relax\footnote{$^*$Equal contribution.}
\addtocounter{footnote}{-1}\let\thefootnote\svthefootnote
\maketitle

\begin{abstract}
Federated learning works by aggregating locally computed gradients from multiple clients, thus enabling collaborative training without sharing private client data. However, prior work has shown that the data can actually be recovered by the server using so-called gradient inversion attacks. While these attacks perform well when applied on images, they are limited in the text domain and only permit approximate reconstruction of small batches and short input sequences. In this work, we propose \tool, \emph{the first algorithm to recover whole batches of input text exactly}. \tool leverages the low-rank structure of self-attention layer gradients and the discrete nature of token embeddings to efficiently check if a given token sequence is part of the client data. We use this check to exactly recover full batches in the honest-but-curious setting without any prior on the data for both encoder- and decoder-based architectures using exhaustive heuristic search and a greedy approach, respectively. We provide an efficient GPU implementation of \tool and show experimentally that it recovers full batches of size up to 128 on large language models (LLMs), beating prior attacks in speed (20x at same batch size), scalability (10x larger batches), and reconstruction quality (ROUGE-1/2 > 0.99).
\end{abstract}

\section{Introduction}

While large language models (LLMs) have demonstrated exceptional potential across a wide range of tasks, training them requires large amounts of data. However, this data is sensitive in many cases, leading to privacy concerns when sharing it with third parties for model training. Federated learning (FL) has emerged as a promising solution to addressing this issue by allowing multiple parties to collaboratively train a model by sharing only gradients computed on their private data with the server instead of the data itself. In particular, FL has been used to finetune LLMs while protecting private data \citep{towardsfed,federatedllm,babakniya2023slora} in privacy-critical domains, such as law \citep{fedlegal} and medicine \citep{medicalfed}.

\paragraph{Gradient Inversion Attacks} Unfortunately, recent work has shown that this private data can be recovered from the shared gradients using so-called gradient inversion attacks, raising concerns about the privacy guarantees of federated learning \citep{dlg}. While most prior work on gradient inversion attacks has focused on image data~\citep{geiping, nvidia, ggl}, first works have demonstrated that text can also be recovered \citep{dlg,Deng2021TAGGA,lamp}. However, as these approaches are optimization-based, the discrete nature of text data poses a major challenge by inducing much harder optimization problems and limiting them to approximate recovery of small batch sizes and short sequences. Therefore, applying existing attacks methods on modern LLMs would be computationally infeasible or yield subpar reconstructions.

\paragraph{This Work: Exact Recovery of Large Batches and Long Sequences} To overcome these limitations, we propose \tool (\toolL), the first exact gradient inversion attack for (transformer-based) LLMs in the challenging honest-but-curious setting. Our key insight is that while discrete inputs pose a challenge for optimization-based attacks, they can be leveraged in combination with the low-rank structure of gradients to enable exact recovery via search-based attacks. Crucially, we show that the gradients of self-attention projection matrices in transformers are i) typically low-rank and ii) linear combinations of input embeddings. This allows us to check whether a given input embedding lies within the span of the gradient and was thus part of the input sequence. We use this to first recover the set of input tokens and then reconstruct the full sequences. For decoder architectures \tool leverages their causal attention masks for to derive an efficient greedy recovery, while for encoder architectures, \tool uses several heuristics to make exhaustive search tractable. As \tool only requires propagating inputs through the first transformer block instead of full gradient computations, it scales to very large models. In fact, the higher internal dimension of these models even allows \tool to recover more information as our low-rankness assumptions hold for larger batch sizes and longer sequences. We note that this approach is applicable both to the easier next-token prediction and to the harder classification setting.

\paragraph{Evaluation} We demonstrate in an extensive evaluation that \tool enables the exact recovery of long sequences and large batch sizes for both encoder- and decoder-based architectures, beating prior attacks in terms of speed (20x at same batch sizes), scalability (10x larger batches), and reconstruction quality (ROUGE-1/2 > 0.99). In particular, we show this for GPT-2 \citep{radford2019language}, LLaMa-2 \citep{touvron2023llama}, and BERT \citep{devlin2018bert} across CoLA\citep{warstadt2018neural}, SST-2 \citep{socher2013recursive}, Rotten Tomatoes \citep{Pang+Lee:05a} and ECHR \citep{chalkidis-etal-2019-neural}, for batch sizes up to 128. Additionally, we demonstrate that \tool is versatile and can be applied to a wide range of settings, including FedAvg \citep{mcmahan2017communication}, LoRA \citep{lora} finetuning and model quantization \citep{jacob2018quantization}.

\paragraph{Key Contributions}  Our main contributions are:
\vspace{-2mm}
\begin{itemize}
    \setlength\itemsep{0.1em}
    \item We show how the low-rankness of self-attention layer gradients can be leveraged to check whether specific discrete inputs were present in the input (\cref{sec:overview}).
    \item We leverage this key insight to propose \tool, the first exact gradient inversion attack for transformers (\cref{sec:technical}).
    \item We conduct an extensive empirical evaluation demonstrating that \tool is not only able to reconstruct inputs exactly but also scales to much larger batch sizes, longer input sequences, and larger models than prior attacks, while also being significantly faster to mount (\cref{sec:experiments}).
    \item We provide an efficient GPU implementation of \tool, that can be publicly accessed at \href{https://github.com/insait-institute/dager-gradient-inversion}{https://github.com/insait-institute/dager-gradient-inversion}.
\end{itemize}

\section{Related Work}\label{sec:related}
Gradient leakage attacks, first introduced by \citet{dlg}, generally fall into two categories --- honest-but-curious attacks~\citep{dlg,analyticPhong, idlg,geiping,nvidia,aaai,ggl,Deng2021TAGGA,lamp, princeton,APRIL,rgap,spear}, where the attacker passively observes the client's federated learning updates and tries to recover the data solely based on them, and malicious server attacks~\citep{cah, rtf, decepticons, panning, fishing} where the attacker is further allowed to modify the federated learning model shared with the client. In this work, we focus on the harder to attack and more realistic honest-but-curious setting. A large body of the gradient leakage literature in this setting focuses on image data~\citep{geiping,nvidia,aaai,ggl,rgap}. Differently, gradient leakage in the text domain remains successful only in the case of a malicious adversary~\citep{decepticons,panning,li2023beyond}. In the honest-but-curious setting, the results either remain limited to short sequences and small batch sizes $B$~\citep{dlg,Deng2021TAGGA,lamp}, require large number of gradient updates~\citep{princeton}, or cannot recover the order of tokens in client sequences~\citet{flat}. Further, state-of-the-art attacks require strong data priors~\citep{lamp,princeton}, and do not scale to realistic decoder-based LLMs. In contrast, \tool, works on large batches and sequences for both encoder- and decoder-based transformers, including LLaMa-2 \citep{touvron2023llama}. Additionally, unlike prior work, our attack works on both token prediction tasks and the harder setting of sentiment analysis~\citep{lamp} where label recovery methods, such as ~\citep{flat}, are not applicable. Finally, \tool has no requirements for the state of the model training. In instance, \citep{princeton} exploits model memorization of the data, unlike \tool, which can handle the more realistic setting of being applicable at any point in time. Further, in contrast to \citep{APRIL,princeton,decepticons}, we do not require the gradient of the embedding layer, making our setting significantly harder.

While most prior honest-but-curious attacks leverage optimization methods to approximately recover the client inputs~\citep{dlg,idlg,geiping,nvidia,aaai,ggl,Deng2021TAGGA,lamp}, several works have shown that exact reconstruction is possible for batch size $B=1$ under various conditions for different architectures~\citep{analyticPhong,APRIL,rgap}. Crucially, \citet{spear} recently showed that $B>1$ exact reconstruction from gradients of fully-connected layers is also possible. Our work, builds upon this result to show that exact gradient leakage is also possible for transformer-based LLMs.

\section{Background and Notation}
\label{sec:background}
In this section, we introduce the background and notation required to understand our work. To this end, we first recall the basic operation of the transformer architecture in the context of LLMs, and then describe the result, first introduced in \citet{spear} for linear layers, in the context of a self-attention layer showing that the gradients of its linear transformations have a low-rank structure. The notations used throughout this paper are summarized in \cref{tab:table_of_notations} for clarity and ease of reference.

\begin{table}[!t]
    \centering
    \newcommand{\twocol}[1]{\begin{tabular}[c]{@{}l@{}}#1\end{tabular}}
    \caption{Table of notations used in the technical description of DAGER.}
    \vspace{-2mm}
    \resizebox{1.0\linewidth}{!}{
    \begin{tabular}{cl@{\hskip 1cm}cl}
         \hline
         \textbf{Symbol}&\multicolumn{1}{l}{\textbf{Definition}}&\textbf{Symbol}&\multicolumn{1}{l}{\textbf{Definition}}\\
         \hline
         $B$&Batch size&$\mathcal{L}$&Loss function used for training\\
         $P$&Transformer context length&$d$&Hidden(embedding) dimension\\
         $L$&Number of transformer blocks&$\mathcal{V}$&Vocabulary set\\
         $V$&Vocabulary size $|\mathcal{V}|$&$n_j$&Token length for the $j$-th sequence\\
         $n$&\twocol{$\max_j b_j$ - the length of the \\longest sequence}&$b$&\twocol{$\sum_{j=1}^B n_j$ - the total number of non-\\padding tokens}\\
         $f^0$&\twocol{Embedding function \\(maps tokens to embeddings)}&$z^{ij}$&\twocol{The $j$-th entry of the $i$-th position \\token's embedding.}\\
         $\bm{Z}^l$&Input to the $l$-th attention layer&$\bm{M}$&The attention mask\\
         $\bm{W}_l^{\{Q, K, V\}}$&\twocol{Query/key/value projection weights\\for the $l$-th attention layer}&$\{Q, K, V\}_l$&\twocol{The query/key/value embeddings \\in the $l$-th attention layer}\\
         $f_i^l$&\twocol{The $i$-th token embedding after\\the $l$-th transformer block}&$\mathcal{T}^*_i$&The set of client tokens at position $i$\\
         $\mathcal{S}^*_i$&\twocol{The set of batch sequences up to \\position $i$}&$s_1, s_2, ..., s_P$& A sample sequence of $P$ tokens\\
         $S^*_{best}$&\twocol{The set of the best reconstructed \\sequences}&$\mathcal{D}^*$&\twocol{The set of distances to the span for \\each token/sequence}\\
         $\tau^{rank}_l$&\twocol{The singular value threshold for de-\\termining the rank of the l-th layer}&$\tau_l$&\twocol{The distance threshold for filtering\\token candidates on the l-th layer}\\
        \hline
        \end{tabular}
    }
    \vspace{-5mm}
    \label{tab:table_of_notations}
\end{table}

\subsection{Transformers} \label{sec:transformers}
In this paper, we consider LLMs based on both encoder and decoder transformer architectures trained using the FedSGD~\citep{fedsgd} protocol and a loss function $\mathcal{L}$. While we mainly focus on the harder-to-attack binary-classification loss typically used for sentiment analysis, we demonstrate that \tool is equally applicable to the next-token prediction loss in \cref{sec:experiments}, which contains more gradient information, as suggested by prior work \citet{dlg}.
We denote the transformer's context length with $P$, its hidden dimension with $d$, the number of transformer blocks with $L$, the vocabulary of the tokenizer with $\bc{V}$, and its size with $V$. We present our approach for single-headed self-attention but it can be directly extended to multi-head self-attention, and we experimentally apply \tool in this context.

\paragraph{Transformer Inputs}
We consider inputs batches consisting of $B$ sequences of tokens, where $n_j$ is the length of the $j$\th batch element. Sequences with length $< n = \max_j(n_j)$ are padded. We denote the total number of non-padding tokens in a batch with $b=\sum_{j=1}^B n_j$.

\paragraph{Token Embeddings}
The discrete input tokens are usually embedded via a function $f^0 \colon [V] \times [P] \to \mathbb{R}^d$ mapping a token's vocabulary index $v$ and its position $i$ in the sequence to an embedding vector $\vz^{ij} = f^0(v,i)$. 
These embeddings $\vz^{ij}$ are then stacked row-wise to form the input $\mZ_1 \in \mathbb{R}^{b \times d}$ to the first self-attention layer. 
Note that $f^0$ is known to the server, as it is part of the model. Further, while $f^0$ differs between models, typically it maps token indices to embedding vectors before optionally adding a positional encoding and applying a LayerNorm. Crucially, $f^0$ is applied per-token. %

\paragraph{Self-Attention}
The stacked embeddings $\mZ_1$ are then passed through a series of self-attention layers. 
We denote the input to the $l$\th self-attention layer as $\mZ_l \in \mathbb{R}^{b \times d}$, for $1 \leq l \leq L$. A self-attention layer is a combination of three linear layers: The query $\mQ_l = \mZ_l \mW_l^Q$, key $\mK_l = \mZ_l \mW_l^K$, and value $\mV_l = \mZ_l \mW_l^V$ layer, which are then combined to compute the self-attention output:
\begin{equation*}
	\attention(\mQ_l, \mK_l, \mV_l) = \softmax\left( \mM \odot \tfrac{\mQ_l \mK_l^T}{\sqrt{d}}\right) \mV_l,
\end{equation*}
where $\mM$ is the binary self-attention mask, $\odot$ is the element-wise product, and the softmax is applied row-wise. 
$\mM$ is chosen to ensure that padding tokens do not affect the layer's output. Further, for decoders, $\mM$ ensures that only preceeding tokens are attended. For notational convenience, we denote as $f^l_i\colon \mathcal{V}^P\to\mathbb{R}^{d}$ the function that maps any sequence of input tokens to the $i$\th input embedding at the $1 \leq l \leq L$ self-attention layer. Note that $f^l_i$ is part of the model and, thus, known to the attacker.

\subsection{Low-Rank Decomposition of Self-Attention Gradients}\label{sec:low_rank}

For a linear layer $\mY=\mX \mW + (\vb| \dots| \vb)^T$ with a weight matrix $\mW \in \mathbb{R}^{n \times m}$, a bias $\vb \in \mathbb{R}^m$, and batched inputs $\mX \in \mathbb{R}^{b \times n}$ and outputs $\mY \in \mathbb{R}^{b \times m}$, \citet{spear} show that:

\begin{restatable}[Adapted from \citet{spear}]{theorem}{dLdWdLdZXT} \label{theorem:dLdW_dLdZ.XT}
	The network's gradient w.r.t. the weights $\mW$ can be represented as the matrix product:
	\begin{equation}\label{eq:decomp}
		\tfrac{\partial \mathcal{L}} {\partial \mW} = \mX^T \tfrac{\partial \mathcal{L}} {\partial \mY}.
	\end{equation}
	Further, when the batch size $b \leq n, m$, the rank of $\frac{\partial \mathcal{L}} {\partial \mW}$ is at most $b$.
\end{restatable}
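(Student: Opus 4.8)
The plan is to prove the two claims separately, since the rank bound follows almost immediately once the factorization in \cref{eq:decomp} is established. The first claim is a direct chain-rule computation for an affine layer, and the second is a standard fact about the rank of a matrix product, combined with the observation that the inner dimension of the product is exactly the batch size $b$.

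For the factorization, I would work entrywise. Writing the affine map as $Y_{ij} = \sum_{k} X_{ik} W_{kj} + b_j$, the bias term is independent of $\mW$, so $\partial Y_{ij}/\partial W_{kl} = X_{ik}\,\delta_{jl}$ with $\delta$ the Kronecker delta. Applying the chain rule and collapsing the delta gives
\begin{equation*}
	\frac{\partial \mathcal{L}}{\partial W_{kl}} = \sum_{i,j} \frac{\partial \mathcal{L}}{\partial Y_{ij}}\,\frac{\partial Y_{ij}}{\partial W_{kl}} = \sum_{i} X_{ik}\,\frac{\partial \mathcal{L}}{\partial Y_{il}} = \sum_{i} (\mX^T)_{ki}\,\Big(\frac{\partial \mathcal{L}}{\partial \mY}\Big)_{il},
\end{equation*}
which is precisely the $(k,l)$ entry of the matrix product $\mX^T \frac{\partial \mathcal{L}}{\partial \mY}$. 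This establishes \cref{eq:decomp}. The only point requiring care is confirming that the bias contributes nothing to the weight gradient, which is immediate from its independence of $\mW$.

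For the rank bound, I would invoke the submultiplicativity of rank: for any compatible matrices $\mA, \mB$ one has $\rank(\mA\mB) \le \min\{\rank(\mA), \rank(\mB)\}$. Here $\mX^T \in \mathbb{R}^{n \times b}$ and $\frac{\partial \mathcal{L}}{\partial \mY} \in \mathbb{R}^{b \times m}$ share inner dimension $b$, so each factor has rank at most $b$, giving $\rank\big(\frac{\partial \mathcal{L}}{\partial \mW}\big) \le b$. The hypothesis $b \le n, m$ is exactly what guarantees that $b$ is the binding constraint: without it, the generic bound $\min\{n, b, m\}$ could be dominated by $n$ or $m$, and the statement that the rank is ``at most $b$'' would be weaker than the ambient dimensions already force.

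I do not anticipate a genuine obstacle here, as both steps are elementary; the work is essentially bookkeeping. If anything, the subtle part is interpretive rather than technical: making explicit that this low-rank structure, with rank controlled by the batch size $b$ rather than by the layer dimensions $n, m$, is what later enables the span-membership test, since each column of $\frac{\partial \mathcal{L}}{\partial \mW}$ lies in the column space of $\mX^T$, i.e.\ in the span of the (transposed) input rows. I would flag this connection explicitly so that the application to self-attention layers $\mZ_l$ later in the paper reads as a direct specialization with $\mX = \mZ_l$.
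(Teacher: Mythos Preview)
Your proposal is correct and aligns with the paper's treatment. The paper does not give a separate proof of the factorization (it is cited as adapted from prior work), and for the rank bound it offers only the one-line remark that ``the rank limit follows directly from the dimensionalities of $\tfrac{\partial \mathcal{L}}{\partial \mY} \in \mathbb{R}^{b \times m}$ and $\mX \in \mathbb{R}^{b \times n}$ in \cref{eq:decomp}''; your entrywise chain-rule derivation and rank-submultiplicativity argument simply make explicit what the paper leaves implicit.
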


The rank limit follows directly from the dimensionalities of $\frac{\partial \mathcal{L}} {\partial \mY} \in \mathbb{R}^{b \times m}$ and $\mX \in \mathbb{R}^{b \times n}$ in \cref{eq:decomp}. 

In this work, we apply \cref{theorem:dLdW_dLdZ.XT} to the linear projection matrices $\mW^{\{Q,K,V\}}_l\in\mathbb{R}^{d \times d}$. As long as the total number of tokens $b<d$, it states that the gradients $\frac{\partial \mathcal{L}}{\partial \mW^Q_l}$, $\frac{\partial \mathcal{L}}{\partial \mW^K_l}$, and $\frac{\partial \mathcal{L}}{\partial \mW^V_l}$ are rank-deficient. Without loss of generality, for the rest of the paper we use $\frac{\partial \mathcal{L}}{\partial \mW^Q_l} =\mZ_l^T \frac{\partial \mathcal{L}}{\partial \mQ_l}$ to explain our method.

\section{Overview of \tool}
\label{sec:overview}
\begin{wrapfigure}{R}{0.65\textwidth}
	\begin{minipage}{.65\textwidth}
		\vspace{-5.75em}
		\begin{figure}[H]
			\centering
			\includegraphics[width=\linewidth]{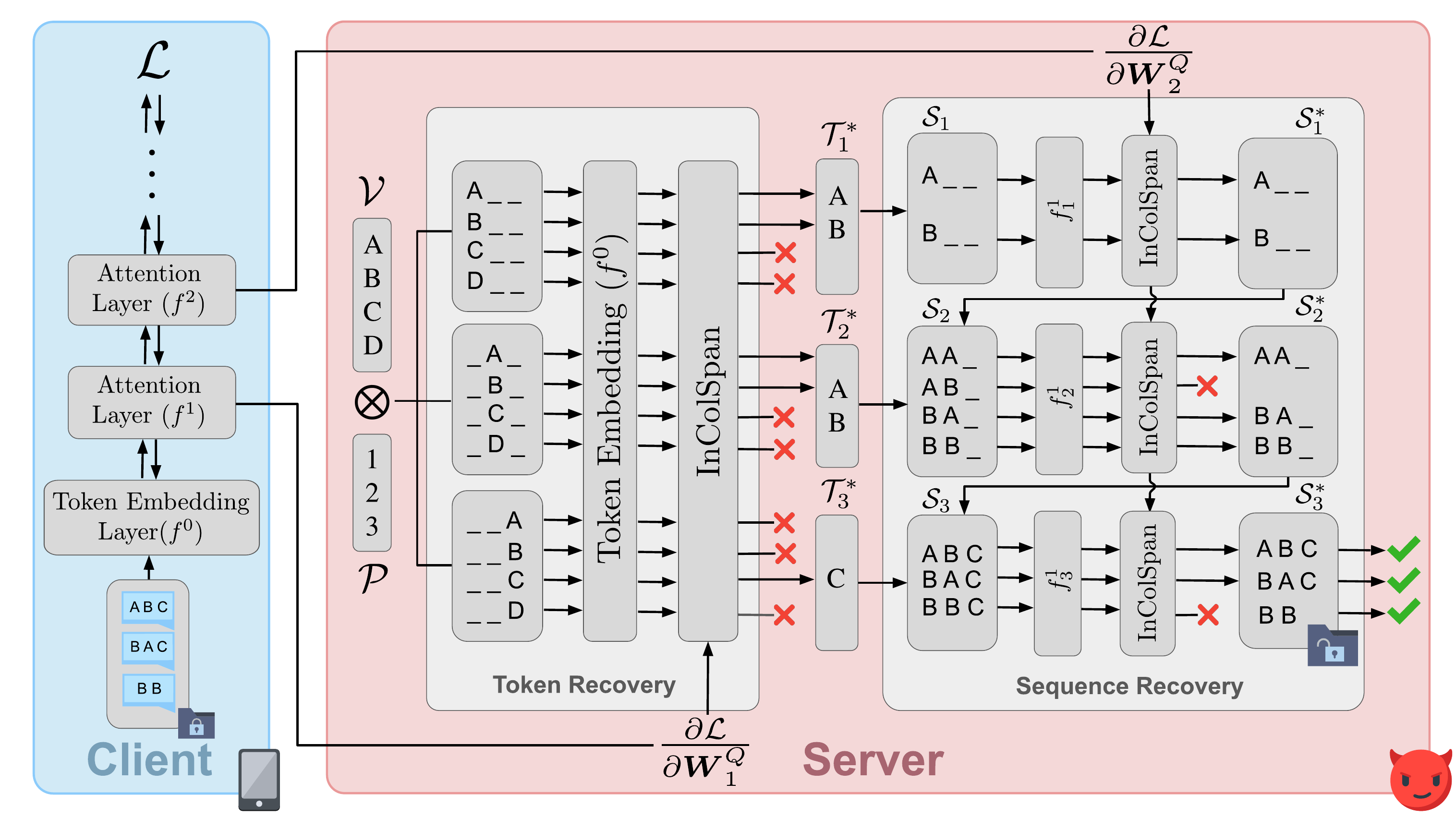}
			\vspace{-2em}
			\caption{Overview of \tool. \tool first recovers the sets of client tokens $\bc{T}^*_i$ at each position $i\in\mathcal{P}$ by testing each token in the vocabulary $\bc{V}$ via a span check based on the client gradients of the first self-attention. Then it recursively combines them into partial client sequences $\bc{S}_i$ with length up to $i$, filtered to obtain the correct sequences $\bc{S}^*_i$ via the gradients of the second self-attention. }
			\label{fig:overview}
		\end{figure}
		\vspace{-2.3em}
	\end{minipage}
\end{wrapfigure}

In this section, we provide a high-level overview of our method \tool, illustrated in \cref{fig:overview}. \tool is an attack that recovers the client input sequences from the shared gradients of a transformer-based LLM. \tool works for both encoder and decoder-based LLMs, however, for simplicity here we focus on decoder-only LLMs. While, in theory, one could enumerate all possible batches of input sequences, and check whether they produce the desired gradients, this is infeasible in practice as it requires computing $V^{P\times B}$ different gradients. We reduce the search space by leveraging the rank-deficiency of $\frac{\partial \mathcal{L}}{\partial \mW^Q_l}$, discussed in \cref{sec:low_rank}, combined with the finite number of possible inputs to each self-attention corresponding to one of the $V^{P\times B}$ gradients above. For the rest of the section, we assume rank-deficiency of $\frac{\partial \mathcal{L}}{\partial \mW^Q_l}$, that is $b<d$. This assumption is in practice satisfied for reasonable input lengths and batch sizes.

\paragraph{Leveraging the Rank Deficiency} 
As the gradient matrix $\frac{\partial \mathcal{L}}{\partial \mW^Q_l}$ is rank-deficient, i.e. $b < d$, the columns of $\frac{\partial \mathcal{L}}{\partial \mW^Q_l}$ form a subspace of $\mathbb{R}^d$ of dimension $b$.
Further, under mild assumptions (see \cref{theorem:spanineq}), the embedding vectors forming $\mZ_l$ are linear combinations of the columns of $\frac{\partial \mathcal{L}}{\partial \mW^Q_l} =\mZ_l^T \frac{\partial \mathcal{L}}{\partial \mQ_l}$. It is unlikely that any incorrect embedding vector, part of one of the $V^{P\times B}$ incorrect inputs $\mZ_l$, is part of $\ColSpan(\frac{\partial \mathcal{L}}{\partial \mW^Q_l}) \subset \mathbb{R}^d$, as the hypervolume of this subspace is $0$.

\paragraph{Filtering Incorrect Embeddings} We can efficiently filter out all incorrect client embeddings at any layer $l$ without computing their gradient, simply by checking if they are in $\ColSpan(\frac{\partial \mathcal{L}}{\partial \mW^Q_l})$.
However, applying this procedure naively still requires us to check all $V^{P\times B}$ different client batches. Instead, we leverage this filtering in a two-stage recovery algorithm that first recovers the client tokens $\mathcal{T}^*_i$ at position $i$ using the rank deficiency of $\frac{\partial \mathcal{L}}{\partial \mW^Q_1}$ (Token Recovery in \cref{fig:overview}), and then recovers the client batch of sequences $\mathcal{S}^*$ based on $\mathcal{T}^*_i$ and the rank deficiency of $\frac{\partial \mathcal{L}}{\partial \mW^Q_2}$ (Sequence Recovery in \cref{fig:overview}).

\paragraph{Token Recovery}
Our token recovery method relies on the observation that $f^0$ is computed per-token. Therefore, the input embeddings in $\mZ_1$ are always part of the set $\{f^0(v,i)| v\in[V], i\in[P]\}$. We apply our span check above
to this set for the first layer gradients $\frac{\partial \mathcal{L}}{\partial \mW^Q_1}$ to filter the incorrect embeddings and their corresponding client tokens $v$ at position $i$, thus, constructing the set of correct client tokens $\bc{T}^*_i$ at position $i$.
\paragraph{Sequence Recovery}
In our sequence recovery, we leverage the fact that $f^1_i$ is computed per-sequence and that the decoder mask $\mM$ ensures that the second layer input embeddings at position $i$ do not depend on tokens with position $>i$, i.e., $f^1_i(s_1, \dots, s_P) = f^1_i(s_1, \dots, s_i)$, for any sequence of tokens $s_1,\dots, s_P$. Crucially, for a correct client partial sequence $s_1, \dots, s_{i-1}$ of length $i-1$ this allows us to find the correct next token in $\bc{T}^*_{i}$ by simply extending it with all possible token $\overline{s}_{i}\in\bc{T}^*_{i}$ and then checking which of the resulting embedding vectors $f^1_i(s_1, \dots, s_{i-1},\overline{s}_i)$ is correct, i.e, is in $\ColSpan(\frac{\partial \mathcal{L}}{\partial \mW^Q_2})$. We apply this procedure iteratively starting with the single token sequences $\bc{S}^*_1 = \bc{T}^*_1$, extending them one token at a time to produce the partial sequence reconstructions $\bc{S}^*_i$, until the sequences cannot be extended anymore and return the result.

\section{\tool: Exact Sequence Recovery for Transformers}
\label{sec:technical}

In this section, we present the technical details of \tool. 
Specifically, we first theoretically derive of our filtering procedure based on the rank-deficiency of $\frac{\partial \mathcal{L}}{\partial \mW^Q_l}$ in \cref{sec:filtering}. We then describe how we apply it on the gradients of the first and second self-attention layers to respectively recover the client tokens (\cref{sec:word_rec}) and sequences (\cref{sec:seq_rec}).

\subsection{Efficient Embedding Filtering} \label{sec:filtering}

Below, we discuss the technical details of our filtering procedure, outlined in \cref{sec:overview}, and prove its correctness. We first show that, under mild assumptions, the embedding vectors forming $\mZ_l$ are linear combinations of the columns of $\frac{\partial \mathcal{L}}{\partial \mW^Q_l}$, restating this in terms of $\RowSpan(\mZ_l)$ and $\ColSpan(\frac{\partial \mathcal{L}}{\partial \mW^Q_l})$:
\begin{restatable}{theorem}{spanineq}\label{theorem:spanineq}
	If $b<d$ and the matrix $\frac{\partial \mathcal{L}}{\partial \mQ_l}$ is of full rank (rank $b$), then $\RowSpan(\mZ_l) = \ColSpan(\frac{\partial \mathcal{L}}{\partial \mW^Q_l})$.
\end{restatable}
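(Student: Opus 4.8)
The plan is to read everything off the factorization supplied by \cref{theorem:dLdW_dLdZ.XT}, which in our setting reads $\frac{\partial \mathcal{L}}{\partial \mW^Q_l} = \mZ_l^T \frac{\partial \mathcal{L}}{\partial \mQ_l}$. I would write $A := \mZ_l^T \in \mathbb{R}^{d \times b}$ and $B := \frac{\partial \mathcal{L}}{\partial \mQ_l} \in \mathbb{R}^{b \times d}$, so that the gradient is the product $AB \in \mathbb{R}^{d \times d}$. Since $\ColSpan(\mZ_l^T) = \RowSpan(\mZ_l)$, the claim is exactly $\ColSpan(AB) = \ColSpan(A)$; that is, right-multiplication of $A$ by $B$ must not shrink its column space. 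The whole statement thus reduces to a clean linear-algebra fact about ranges of matrix products.

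One inclusion is immediate and requires no hypotheses: every column of $AB$ is an $A$-combination of the columns of $B$ and hence lies in $\ColSpan(A)$, giving $\ColSpan(\frac{\partial \mathcal{L}}{\partial \mW^Q_l}) = \ColSpan(AB) \subseteq \ColSpan(A) = \RowSpan(\mZ_l)$.

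For the reverse inclusion I would invoke the full-rank hypothesis on $B = \frac{\partial \mathcal{L}}{\partial \mQ_l}$. Because $b < d$, a $b \times d$ matrix has full rank precisely when it has full row rank $b$, and such a $B$ admits a right inverse $B^{+} \in \mathbb{R}^{d \times b}$ with $B B^{+} = \mathrm{I}_b$ (concretely $B^{+} = B^T (B B^T)^{-1}$, where $B B^T$ is invertible exactly because $\rank(B) = b$). Then for any $\vv \in \ColSpan(A)$, say $\vv = A\vw$, I can write $\vv = A (B B^{+}) \vw = (AB)(B^{+}\vw)$, so $\vv \in \ColSpan(AB)$. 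This yields $\ColSpan(A) \subseteq \ColSpan(AB)$, and combining with the trivial inclusion gives $\RowSpan(\mZ_l) = \ColSpan(\frac{\partial \mathcal{L}}{\partial \mW^Q_l})$.

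The only substantive content is this reverse inclusion, and the place where it could fail is precisely the full-rank assumption: if $\frac{\partial \mathcal{L}}{\partial \mQ_l}$ did not have rank $b$, then $B B^T$ would be singular, no right inverse would exist, and $\ColSpan(AB)$ could be a proper subspace of $\RowSpan(\mZ_l)$. So the hypothesis is doing exactly the work of ensuring that no embedding direction in $\RowSpan(\mZ_l)$ is annihilated by the back-propagated factor $B$. An equivalent packaging of the same step is Sylvester's rank inequality $\rank(AB) \ge \rank(A) + \rank(B) - b$, which with $\rank(B) = b$ forces $\rank(AB) = \rank(A)$ and hence equality of the nested column spaces; I would nonetheless present the right-inverse argument, since it directly exhibits the required combinations rather than arguing only through dimension counts.
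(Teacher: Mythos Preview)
Your proof is correct and follows essentially the same route as the paper: both establish the trivial inclusion $\ColSpan(AB)\subseteq\ColSpan(A)$ from the factorization $\frac{\partial \mathcal{L}}{\partial \mW^Q_l}=\mZ_l^T\frac{\partial \mathcal{L}}{\partial \mQ_l}$, and then use the full-row-rank hypothesis on $\frac{\partial \mathcal{L}}{\partial \mQ_l}$ to upgrade this to equality. The only presentational difference is that the paper closes via a dimension count (arguing $\rank(AB)=\rank(A)$ and hence the nested spaces coincide), whereas you give the explicit right-inverse construction; your version is in fact a bit cleaner, since the paper's line $\rank(\frac{\partial \mathcal{L}}{\partial \mW^Q_l})=\min(\rank(\mZ_l^T),\rank(\frac{\partial \mathcal{L}}{\partial \mQ_l}))$ is not a general identity but only holds here precisely because of the full-rank assumption---which your argument (and your Sylvester alternative) makes transparent.
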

Note that the assumption that $\frac{\partial \mathcal{L}}{\partial \mQ_l}$ is full-rank holds in practice, as shown empirically in \citet{spear}, and that further $b < d$ is almost always satisfied, i.e., that the total number of tokens in the input is smaller than the internal dimensionality of the model, for practical LLMs. We discuss the assumptions in further detail in \cref{app:assumptions}. The latter then directly implies the rank-deficiency of $\frac{\partial \mathcal{L}}{\partial \mW^Q_l}$, which we leverage to show: 
\begin{restatable}{theorem}{almostsurely}\label{theorem:almost_surely}
	When $b<d$, the probability of a random vector $\in\mathbb{R}^d$ to be part of $\ColSpan(\frac{\partial \mathcal{L}}{\partial \mW^Q_l})$ is almost surely $0$.
\end{restatable}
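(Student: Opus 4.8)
The plan is to reduce the claim to the elementary fact that a proper linear subspace of $\mathbb{R}^d$ carries zero $d$-dimensional Lebesgue measure, and then to combine this with the rank bound already in hand. First I would invoke \cref{theorem:dLdW_dLdZ.XT}, which gives $\rank\!\big(\tfrac{\partial \mathcal{L}}{\partial \mW^Q_l}\big) \le b$; since we are in the regime $b < d$, the column span $U := \ColSpan\!\big(\tfrac{\partial \mathcal{L}}{\partial \mW^Q_l}\big)$ is a linear subspace of $\mathbb{R}^d$ with $\dim U \le b < d$. In particular $U$ is a \emph{proper} subspace, which is the only structural fact about the gradient that the argument needs.

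Next I would show that any proper linear subspace $U \subsetneq \mathbb{R}^d$ is a Lebesgue-null set. Because $\dim U \le d-1$, the orthogonal complement $U^\perp$ is nontrivial, so there is a nonzero vector $a \in \mathbb{R}^d$ with $U \subseteq H := \{x \in \mathbb{R}^d : \langle a, x\rangle = 0\}$. Applying an orthonormal change of coordinates that sends $a/\lVert a\rVert$ to a coordinate axis turns $H$ into the coordinate hyperplane $\{x_d = 0\}$, which is null by Fubini's theorem (it is $\mathbb{R}^{d-1}$ times a single point). Since rotations preserve Lebesgue measure, $H$ is null, and therefore so is $U \subseteq H$. (Alternatively one could note that $U$ is the image of $\mathbb{R}^{\dim U}$ under a linear, hence Lipschitz, map into a strictly higher-dimensional space, and such images are null.)

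Finally I would make precise the probability model implicit in the statement: ``a random vector'' is to be read as one drawn from a distribution that is absolutely continuous with respect to Lebesgue measure on $\mathbb{R}^d$, which is the natural model here. For any such distribution the mass of a null set is zero, so $\Pr[x \in U] = 0$; equivalently, a random vector lies outside $\ColSpan\!\big(\tfrac{\partial \mathcal{L}}{\partial \mW^Q_l}\big)$ almost surely. The main thing to be careful about — and the single point where the argument could break — is exactly this absolute-continuity hypothesis: if the test vector were permitted to place atoms on $U$ (for example, if it coincided with one of the genuine client embeddings), the probability would no longer vanish. This is precisely the dichotomy that makes the span check sound: true client embeddings lie in $U$ by \cref{theorem:spanineq}, whereas an independently drawn incorrect embedding hits the measure-zero subspace $U$ with probability zero.
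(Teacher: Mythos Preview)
Your proof is correct and follows the same high-level skeleton as the paper: invoke \cref{theorem:dLdW_dLdZ.XT} to get $\rank \le b < d$, conclude that the column span is a proper linear subspace, observe that such a subspace has $d$-dimensional Lebesgue measure zero, and hence a random vector lands in it with probability zero. The paper's own proof differs only in how it justifies the measure-zero step: it appeals to Sard's lemma, whereas you give a direct, more elementary argument via Fubini (or the Lipschitz-image observation). Your route is arguably cleaner for this purpose---Sard's theorem is about critical values of smooth maps and is heavier machinery than the claim requires---and you are also more explicit than the paper about the implicit absolute-continuity assumption on the distribution of the test vector, which the paper leaves unstated.
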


Combining \cref{theorem:spanineq,theorem:almost_surely}, we arrive at our main result stating that, if $b < d$, an embedding vector $\vz$ that is part of the client self-attention inputs $\mZ_l$ belongs to $\ColSpan(\frac{\partial \mathcal{L}}{\partial \mW^Q_l})$, while random embedding vectors that are not part of $\mZ_l$ almost surely do not.

\paragraph{Span Check Implementation} While the above result holds under real, i.e., infinite precision, arithmetic, for our method to work in practice, we require an implementation that is both fast and robust to numerical errors caused by floating-point arithmetic. We, therefore, introduce the metric $d$, the distance between a candidate embedding vector $\vz$ and its projection on the $\ColSpan(\frac{\partial \mathcal{L}}{\partial \mW^Q_l})$:
\begin{equation}
	\label{eq:span_check_simple}
	d(\vz, l) = \|\vz-\proj(\vz,\ColSpan(\tfrac{\partial \mathcal{L}}{\partial \mW^Q_l}))\|_2.
\end{equation} 
Intuitively, the closer $d(\vz, l)$ is to $0$, the more likely $\vz$ is part of the span. To allow for efficient computation of the projection in \cref{eq:span_check_simple}, we first pre-compute an orthonormal basis for $\frac{\partial \mathcal{L}}{\partial \mW^Q_l}$ using an SVD and truncating the eigenvalues below a chosen threshold $\tau^\text{rank}_l$. We can then trivially compute this projection, as the sum of projections onto each basis vector. Finally, we say that a vector $\vz$ is in $\ColSpan(\frac{\partial \mathcal{L}}{\partial \mW^Q_l})$, if the distance $d(\vz,l)<\tau_l$ is below a chosen per-layer threshold $\tau_l$. 

\subsection{Recovering Token Sets}\label{sec:word_rec}
\begin{wrapfigure}[12]{r}{0.45\textwidth}
	\begin{minipage}{0.45\textwidth}
		\vspace{-11.5mm}
		\begin{algorithm}[H]
			\caption{Recovering Individual Tokens}
			\label{alg:tokens}
			\begin{algorithmic}[1]
				\Function{GetTok}{$\frac{\partial \mathcal{L}}{\partial \mW^Q_1}, V, P, f^0, \tau_1$}
					\State $n \gets 0$
					\State $\mathcal{T}^*_i \gets \{\}, \mathcal{D}^*_i \gets \{\}$
					\For{$v, i \gets[V]\times[P]$}
						\State $\bar{d} \gets d(f^0(v,i), 1)$
						\If{$\bar{d}<\tau_1$}
							\State $n \gets \max(n,i+1)$\label{algline:getn}
							\State $\mathcal{T}^*_i\gets \mathcal{T}^*_i + \{v\}$\label{algline:gett}
							\State $\mathcal{D}^*_i\gets \mathcal{D}^*_i + \{\bar{d}\}$\label{algline:getd}
						\EndIf
					\EndFor
					\State \Return $n,  \{\mathcal{T}^*_i\}^n_{i=0}, \{\mathcal{D}^*_i\}^n_{i=0}$
				\EndFunction
			\end{algorithmic}
		\end{algorithm}
		\vspace{-1em}
	\end{minipage}
\end{wrapfigure} 

We now describe how \tool leverages the above filtering procedure to recover the input tokens exactly. To this end, we consider the set of all tokens in the model's vocabulary $v\in[V]$ at every possible position $i\in[P]$ and compute their input embeddings at the first layer via the per-token embedding function $f^0$. We then filter out token-position tuples $(v,i)$ whose embedding vectors $f^0(v,i)$ do not lie in $\ColSpan(\frac{\partial \mathcal{L}}{\partial \mW^Q_1})$ to obtain the set input tokens (across batch elements) at position $i$:
\begin{equation}
	\mathcal{T}^*_i = \{ v \in [V] \mid d(f^0(v,i), 1) < \tau_1\}.
\end{equation}

We formalize this process in \cref{alg:tokens}, where we simply enumerate all token position tuples $(v,i)$. %
Additionally, we compute the length of the longest input sentence $n$ as the largest position $i$ of any recovered tuple $(v,i)$ (\cref{algline:getn}). If $f^0$ is position-independent, e.g. when rotary instead of absolute positional embeddings are used, we recover the set of all input tokens $\mathcal{T}^* = \bigcup_i \mathcal{T}^*_i$ for every position. Our algorithm handles this at the sequence recovery stage at the price of slightly higher computational costs (see \cref{theorem:complexity}).  

\begin{wrapfigure}[11]{r}{0.38\textwidth}
	\centering
	\vspace{-10mm}
	\includegraphics[width=0.9\linewidth]{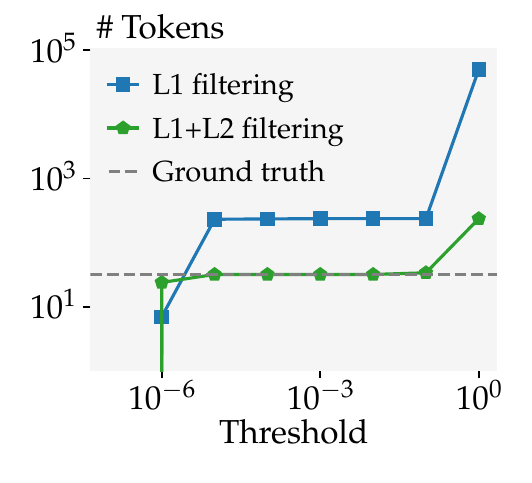}
	\vspace{-4.5mm}
	\caption{Effect of L1 and L2 Filtering}
	\label{fig:filter_eff}
\end{wrapfigure}

While conceptionally simple, this approach is exceptionally effective and robust to the distance threshold $\tau_{1,2}$, as we demonstrate in \cref{fig:filter_eff} for the GPT-2 model~\citep{radford2019language} ($d=768$) and a batch of $B=32$ sequences consisting of $b=391$ tokens. Despite the total number of tokens $b$ exceeding half the model dimensionality $d$, even our single layer (L1) filtering approach narrows the set of possible starting tokens, which are independent of the rest of the input, down to less than $300$ from GPT-2's vocabulary of $V\approx50K$ across a wide range of thresholds. Adding a second filtering stage using second-layer filtering, described next, allows \tool to recover the exact $32$ starting tokens. This is used to exactly narrow down the first token for each sequence, allowing us to inductively reconstruct the whole sentence for decoder-only models. 

\subsection{Recovering Sequences}\label{sec:seq_rec}
Given the set of input tokens, recovered above, we now describe how to recover input sequences by applying our filtering procedure to the inputs of the second self-attention layer $\mZ_2$. We first define the set $\bc{S}=\bc{T}^*_1 \times \dots \times \bc{T}^*_P$ of all sequences formed using the recovered token sets $\bc{T}^*_i$. As the second layer input embeddings $\vz_2 = f^1(\vs)$ are computed independently for each sequence $\vs$, one can naively enumerate all $\vs\in \bc{S}$, compute their second layer embedding vectors $f^1(\vs)$ and apply the span check for every position $i$ to obtain the true set of client sequences:
\begin{equation*}
	\bc{S}^* = \{ \vs \in \bc{S} \mid d(f^1_i(\vs),2) < \tau_2, \; \forall i \in [P]\}.
	\vspace{-1em}
\end{equation*}

\begin{wrapfigure}[18]{r}{0.53\textwidth}
	\vspace{-2em}
	\begin{minipage}{0.53\textwidth}
		\begin{algorithm}[H]
			\caption{\tool{} for Decoders}
			\label{alg:attack_decoder}
			\begin{algorithmic}[1]
				\Function{AttDec}{$B$,$\frac{\partial \mathcal{L}}{\partial \mW^Q_1},\frac{\partial \mathcal{L}}{\partial \mW^Q_2},V,P, f^{0/1}, \tau_{1/2}$}
					\State $n, \mathcal{T}^*, \mathcal{D}^*\gets$\Call{GetTok}{$\frac{\partial \mathcal{L}}{\partial \mW^Q_1},V, P, f^0, \tau_1$}
					\State $\mathcal{S}^*_0 \gets \{\{\}\}^B_{j=1}$
					\For{$i \gets 1,\dots,n$}
						\State $\text{TokenFound} \gets \text{False}$
						\State $\mathcal{S}_i \gets \mathcal{S}^*_{i-1}\times T_i^*$ \label{algline:extending}
						\State $\mathcal{S}^*_i \gets \{\}$
						\For {$\vs \in \mathcal{S}_i$}
							\If{$d(f^1_i(\vs), 2) < \tau_2$}
								\State $\mathcal{S}^*_i \gets \mathcal{S}^*_i + \{\vs\}$\label{algline:recursive}
								
								\State $\text{TokenFound} \gets \text{True}$
							\EndIf
						\EndFor
						\If{\Not{} $\text{TokenFound}$}\label{algline:earlystop}
							\State\Break
						\EndIf
					\EndFor
					\State $\mathcal{S}^*_{\text{best}} \gets $\Call{TopUnique}{$\bigcup^{l}_{i=1}\mathcal{S}^*_i, \frac{\partial \mathcal{L}}{\partial \mW^Q_2},B$}
					\State\Return $\mathcal{S}^*_{\text{best}}$
				\EndFunction
			\end{algorithmic}
		\end{algorithm}
	\end{minipage}
\end{wrapfigure}

Unfortunately, this naive approach requires $\bc{O}(B^P)$ span checks. To alleviate this issue, we first show that the causal attention mask of decoder architectures allows us to greedily recover the exact sequences in polynomial time, before discussing heuristics that make an exhaustive search tractable for encoder-based architectures.

\paragraph{Recovering Decoder Sequences}
Due to the causal attention mask $\mM$ in decoder architectures, the $i$\th input of the second-self attention layer $f^1_i(\vs)$ depends only on the first $i$ tokens of the input sequence $\vs$. We can thus apply a span check on the results of $f^1_i$ to check arbitrary sequence prefixes of length $i$. We leverage this insight in \cref{alg:attack_decoder} to iteratively recover the sets $\bc{S}_{i}^*$ (\cref{algline:recursive}) of input sequence prefixes
\begin{equation*}
	\bc{S}_{i}^* = \{ \vs \in \bc{S}_{i-1}^*\times\bc{T}^*_i \mid d(f^1_i(\vs),2) < \tau_2\},
\end{equation*}
starting from the set of empty sequences $\bc{S}_0$ and extending them one token at a time (\cref{algline:extending}) until none of our sequences can be extended any further (\cref{algline:earlystop}).

\begin{wrapfigure}[20]{r}{0.35\textwidth}
	\vspace{-10mm}
		\centering
		\includegraphics[width=0.9\linewidth]{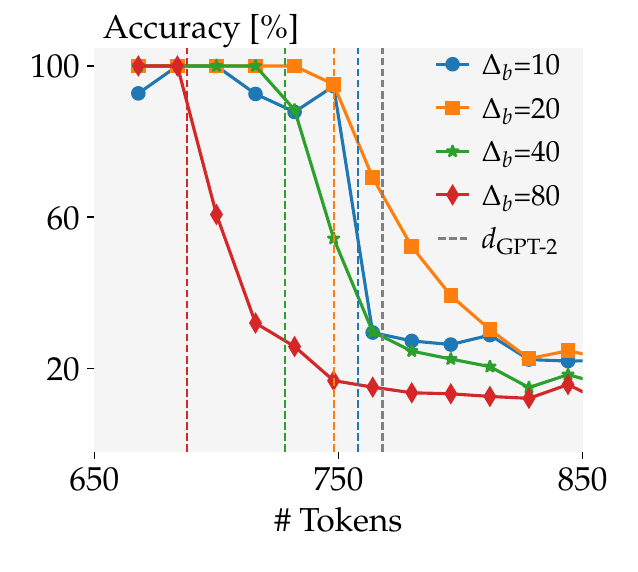}
		\vspace{-4mm}
		\caption{Encoder Ablation Study}
		\label{fig:decoder_study}

	\centering
	\vspace{1mm}
	\includegraphics[width=0.9\linewidth]{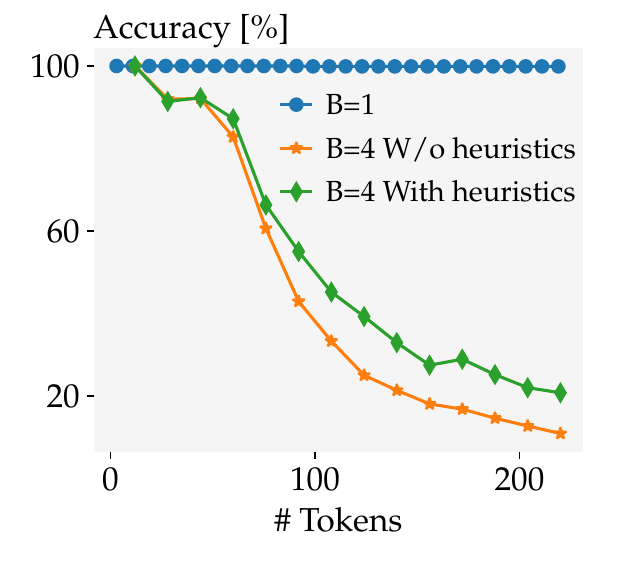}
	\vspace{-5mm}
	\caption{Encoder Ablation Study}
	\label{fig:encoder_study}
\end{wrapfigure}

For models with a small internal dimension $d$, or batches with a large number of total tokens $b$, the weight gradient $\frac{\partial \mathcal{L}}{\partial \mW^Q}$ might become full rank and all embeddings would pass our span check. To avoid this we set a maximum rank threshold,  $\tilde{b}=\min(b, d - \Delta_b)$ for the orthonormal basis computed via SVD (See \cref{sec:filtering}). We visualize the effect of different $\Delta_b$ on GPT-2, in \cref{fig:decoder_study}, and observe that $\Delta_b=20$ offers the best trade-off between stability and accuracy, yielding almost perfect reconstruction even for very large inputs with $b$ very close to $d$. 
\paragraph{Recovering Encoder Sequences}

For encoders, all second-layer embeddings $f^1_i(\vs)$ depend on all input tokens. We thus cannot use the greedy reconstruction discussed above but have to enumerate all sequences in $\bc{S}$. To make this search tractable, we leverage the following heuristics. We can determine the positions $i$ of end-of-sequence (EOS) tokens in the input to determine the input sequence lengths $n_j$. This allows us to recover input sequences by increasing length and eliminate the tokens constituting the recovered sequences from the token sets $\bc{T}^*_i$. Additionally, we truncate the proposal token sets $\bc{T}^*_i$ to the batch size $B$ token closest to $\ColSpan(\frac{\partial \mathcal{L}}{\partial \mW_1^Q})$. Finally, we always consider at most 10M sequences from $\bc{S}$ before returning the ones closest to $\ColSpan(\frac{\partial \mathcal{L}}{\partial \mW_2^Q})$. We demonstrate the effectiveness of our heuristics in \cref{fig:encoder_study} for the base BERT model and different batch sizes $B$ and note that for $B=1$ we can still recover inputs perfectly, as $|\bc{T}^*_i| = 1$. We provide more details on \tool for encoder-architectures in \cref{app:encoders}.

\section{Experimental Evaluation}\label{sec:experiments} 
We now describe our extensive experimental evaluation of \tool. Our results demonstrate significant performance improvements compared to prior methods, on a variety of settings. We also present ablation studies for isolating the effects of each \tool component.

\subsection{Experimental Setup}
\label{sec:exp_setup}
We evaluate \tool on both encoder- and decoder-based models including BERT~\citep{devlin2018bert}, GPT-2~\citep{radford2019language}, and variations of LLaMa~\citep{touvron2023llama,llama3}.
We consider three sentiment analysis datasets -- CoLA~\citep{warstadt2018neural}, SST-2~\citep{socher2013recursive}, and Rotten Tomatoes (RT)~\citep{Pang+Lee:05a}, featuring sequences of varying lengths between typically 4 and 27 words. Additionally, we consider the ECHR~\citep{chalkidis-etal-2019-neural} dataset, which contains sentences exceeding 1000 words to demonstrate the scalability of our approach in sequence length. We provide a more detailed description of our architectures and datasets in \cref{app:exp_details}. Following previous work, we report the mean and error of the ROUGE-1/2~\citep{lin2004rouge} scores, i.e., the overlap rate of unigrams and bigrams, respectively, over 100 batches, excluding padding tokens.  We report the error as the 95\% confidence interval given by twice the standard error. Wherever we cannot assume normality of the mean's distribution, we estimate the interval by generating $10\,000$ random samples by bootstrapping. Additional details regarding computational requirements and hyperparameters can be found in \cref{app:exp_details}.

\paragraph{Comparison against baselines}
First, we compared our performance against the state-of-the-art algorithms TAG~\cite{Deng2021TAGGA} and LAMP~\cite{lamp} with batch sizes ranging between $B=1$ and $B=8$. We run the two attacks for 2500 iterations per experiment, making use of the hyperparameters described in their respective studies. As \citet{lamp} provide two variations of the LAMP algorithm based on different objective functions --- LAMP\textsubscript{$L2+L1$} and LAMP\textsubscript{Cos}, we only report results from the variation with the higher ROUGE-1 score. In \cref{table:comparison_exp}, we show results on GPT-2\textsubscript{BASE} and BERT\textsubscript{BASE}, assessing the performance on decoder-based and encoder-based models respectively. 

The results indicate that for decoder-based models, such as GPT2, \tool achieves near-perfect reconstructions across all datasets and batch sizes, significantly outperforming the baseline algorithms in every setting. Importantly, as further elaborated in \cref{app:runtime}, \tool achieves that while also being significantly more efficient --- 100 batches of size 8 on RT took 3.5 hours vs TAG and LAMP which required $\approx10$ and 50 hours, respectively. Additionally, we confirm the claims made by \citet{lamp} that LAMP outperforms TAG in the majority of settings. Examples of reconstructed sentences can be seen in \cref{tab:examples} in \cref{app:examples}. We note that while we observe non-perfect ROUGE-2 scores on the SST-2 dataset, this is entirely due to an artifact of our metric library that assigns ROUGE-2 score of $0$ to the SST-2's single-word sequences. We kept this behaviour to avoid having to rerun the baseline experiments, that also relied on this. 

Further, \cref{table:comparison_exp} shows a significant improvement over prior work on encoder-based models like BERT, with near-perfect reconstruction for $B=1, 2$, and an average of 43\% more tokens recovered for larger batch sizes. A significant advantage of \tool over the baselines is its ability to more accurately recover the sentence structure, as evidenced by the much higher ROUGE-2 scores.
\begin{table*}[t]\centering
	
	\caption{Comparison of sequence reconstruction from gradients between \tool and the baseline algorithms TAG and LAMP on various batch sizes and datasets. R-1 and R-2 denote the ROUGE-1 and ROUGE-2 scores respectively.} \label{table:comparison_exp}
	\newcommand{\twocol}[1]{\multicolumn{2}{c}{#1}}
	\newcommand{\threecol}[1]{\multicolumn{3}{c}{#1}}
	\newcommand{\fivecol}[1]{\multicolumn{5}{c}{#1}}
	\newcommand{\ninecol}[1]{\multicolumn{9}{c}{#1}}
	
	\newcommand{\bsz}{Batch Size~}
	\newcommand{\certified}{{CR(\%)}}
	
	\renewcommand{\arraystretch}{1.2}
	
	\newcommand{\ccellt}[2]{\colorbox{#1}{\makebox(20,8){{#2}}}}
	\newcommand{\ccellc}[2]{\colorbox{#1}{\makebox(8,8){{#2}}}}
	\newcommand{\ccells}[2]{\colorbox{#1}{\makebox(55,8){{#2}}}}
	
	\newcommand{\temp}[1]{\textcolor{red}{#1}}
	\newcommand{\noopcite}[1]{} 
	
	\newcommand{\skiplen}{0.01\linewidth} 
	\newcommand{\rlen}{0.01\linewidth} 
	\newcolumntype{R}{>{$}r<{$}}
	\resizebox{\linewidth}{!}{
		\begin{tabular}{@{}cc l RR p{\skiplen}  RR p{\skiplen} RR p{\skiplen}  RR @{}} \toprule

			&&& \twocol{$B=1$} && \twocol{$B=2$} && \twocol{$B=4$} && \twocol{$B=8$} \\
			
			\cmidrule(l{5pt}r{5pt}){4-5} \cmidrule(l{5pt}r{5pt}){7-8} \cmidrule(l{5pt}r{5pt}){10-11} \cmidrule(l{5pt}r{5pt}){13-14} 
			
			&&& \multicolumn{1}{c}{\text{R-1}} & \multicolumn{1}{c}{\text{R-2}}  &&\multicolumn{1}{c}{\text{R-1}} & \multicolumn{1}{c}{\text{R-2}}  && \multicolumn{1}{c}{\text{R-1}} & \multicolumn{1}{c}{\text{R-2}}  && \multicolumn{1}{c}{\text{R-1}} & \multicolumn{1}{c}{\text{R-2}}  \\ \cmidrule(l{5pt}r{5pt}){2-14}
			\multirow{9}{*}{\rotatebox[origin=c]{90}{GPT-2}}&
			\multirow{3}{*}{CoLA}
			
			& TAG & 7.0\pm2.5 & 0.54\pm0.54  && 8.0\pm2.0 & 1.4\pm1.3 && 7.8\pm1.2 & 0.8\pm0.5 && 5.3\pm0.7 & 0.4\pm0.2 \\
			&& LAMP & 73.3\pm4.5 & 43.3\pm7.0  && 26.8\pm2.8 & 11.0\pm3.0 && 13.4\pm1.4 & 3.9\pm1.2 && 8.9\pm1.2 & 1.9\pm0.6 \\
			&& \tool & \mathbf{100.0\pm0.0} & \mathbf{100.0\pm0.0}  && \mathbf{100.0\pm0.0} & \mathbf{100.0\pm0.0} && \mathbf{100.0\pm0.0} &  \mathbf{100.0\pm0.0} && \mathbf{100.0\pm0.0} & \mathbf{100.0\pm0.0} \\
			
			\cmidrule(l{5pt}r{5pt}){2-14}
			&\multirow{3}{*}{SST-2}
			
			& TAG & 5.3\pm0.5 & 0.0\pm0.0  && 6.0\pm1.7 & 0.5\pm0.4 && 6.1\pm1.2 & 0.6\pm0.6 && 4.4\pm0.6 & 0.2^{+0.6}_{-0.1} \\
			&& LAMP & 62.2\pm6.9 & 31.8\pm8.4  && 21.4\pm3.1 & 9.2\pm3.1 && 9.8\pm2.0 &  2.7\pm1.3 && 8.1\pm1.1&0.7\pm0.4 \\
			&& \tool & \mathbf{100.0\pm0.0} & \mathbf{86.0\pm7.0}  && \mathbf{100.0\pm0.0} & \mathbf{89.5\pm4.1} && \mathbf{100.0\pm0.0} &  \mathbf{92.8\pm2.4} && \mathbf{100.0\pm0.0} & \mathbf{92.9\pm1.6} \\
			
			\cmidrule(l{5pt}r{5pt}){2-14}  
			&\multirow{3}{*}{\vtop{\hbox{\strut ~~Rotten}\hbox{\strut Tomatoes}}} 
			
			& TAG & 7.1\pm1.8 & 0.1^{+0.4}_{-0.1}  && 7.0\pm1.2 & 0.1^{+0.2}_{-0.1} && 6.2\pm0.8 &  0.1^{+0.2}_{-0.1} && 6.1\pm0.5 & 0.1\pm0.1 \\
			&& LAMP & 31.4\pm4.4 & 9.3\pm3.6  && 11.2\pm1.2 & 0.9\pm0.42 && 6.3\pm1.1 & 0.9\pm0.6 && 6.8\pm0.7 & 0.3^{+0.2}_{-0.1} \\
			&& \tool & \mathbf{100.0\pm0.0} & \mathbf{100.0\pm0.0}  && \mathbf{100.0\pm0.0} & \mathbf{100.0\pm0.0} && \mathbf{99.3^{+0.7}_{-1.7}} &  \mathbf{99.3^{+0.7}_{-1.8}} && \mathbf{100.0^{+0.0}_{-0.1}} & \mathbf{99.9^{+0.1}_{-0.6}}\\

			\midrule
			\multirow{9}{*}{\rotatebox[origin=c]{90}{BERT}}&
			\multirow{3}{*}{CoLA}
			
			& TAG & 78.9\pm4.4 & 10.3\pm3.0  && 68.9\pm4.2 & 7.7\pm1.7 && 56.3\pm3.4 &  6.8\pm1.4 && 45.9\pm1.9 & 3.9\pm0.6 \\
			&& LAMP & 89.6\pm2.5 & 51.9\pm6.7  && 77.8\pm3.6 & 31.5\pm4.6  && 66.2\pm3.4 &  21.8\pm1.7 && 52.9\pm2.2 & 13.1\pm1.9 \\
			&& \tool & \mathbf{100.0\pm0.0} & \mathbf{100.0\pm0.0}  && \mathbf{100.0\pm0.0} &\mathbf{ 100.0\pm0.0}&& \mathbf{94.0\pm2.0} &  \mathbf{89.9\pm3.1} && \mathbf{67.8\pm2.3} & \mathbf{48.8\pm4.5} \\
			
			\cmidrule(l{5pt}r{5pt}){2-14}
			&\multirow{3}{*}{SST-2}
			
			& TAG & 75.4\pm4.3 & 19.0\pm6.9  && 71.8\pm3.6 & 16.0\pm3.9 && 61.0\pm3.4 &  12.3\pm2.8 && 50.4\pm2.4 & 9.2\pm1.6 \\
			&& LAMP & 88.8\pm3.0 & 56.8\pm7.9  && 82.4\pm3.6 & 45.7\pm6.0 && 69.5\pm3.6 &  32.5\pm4.4 && 56.9\pm2.6 & 19.1\pm2.8 \\
			&& \tool & \mathbf{100.0\pm0.0} & \mathbf{100.0\pm0.0}  && \mathbf{99.3^{+0.7}_{-2.0}} & \mathbf{99.0^{+0.8}_{-2.1}} && \mathbf{95.6\pm2.2} &  \mathbf{93.0\pm3.3} && \mathbf{74.1\pm3.3} & \mathbf{59.8\pm2.9} \\
			
			\cmidrule(l{5pt}r{5pt}){2-14}  
			&\multirow{3}{*}{\vtop{\hbox{\strut ~~Rotten}\hbox{\strut Tomatoes}}} 
			
			& TAG & 60.1\pm4.4 & 3.3\pm1.2  && 49.2\pm3.5 & 3.0\pm0.9 && 33.7\pm2.5 &  1.6\pm0.7 && 25.4\pm1.2 & 0.9\pm0.4 \\
			&& LAMP & 64.7\pm4.4 & 16.5\pm3.9  && 46.4\pm3.7 & 7.6\pm2.0 && 35.1\pm2.7 &  4.2\pm1.3 && 27.3\pm1.4 & 2.0\pm0.6 \\
			&& \tool & \mathbf{100.0\pm0.0} & \mathbf{100.0\pm0.0}  && \mathbf{98.1\pm1.2} & \mathbf{96.5\pm1.8} && \mathbf{66.8\pm3.2} &  \mathbf{50.1\pm4.4}&& \mathbf{37.1\pm1.2} & \mathbf{11.4\pm1.3}\\

			\bottomrule
		\end{tabular}
	}
	\vspace{-2mm}
\end{table*}

\paragraph{Main experiments}
While prior attacks' performances become very poor for batch sizes as little as $8$, we now demonstrate that \tool is only limited by the embedding dimension of the model. To this end, in \cref{table:main} we compare two decoder-only models, GPT-2\textsubscript{BASE} with $d=768$, and LLaMa-2 7B with $d=4096$ on $B$ as large as 128.

The results are consistent with our claims that \tool produces almost perfect reconstructions in all cases when the total number of client tokens is not extremely close to the embedding dimension $d$. Further, while on LLaMa-2 \tool achieves near-perfect reconstructions even up to a batch size of 128, for GPT-2 \tool shows partial or complete failure for $B=64, 128$. This suggests that despite the significant computational costs of $>2$ hours per batch for $B=128$ on LLaMa-2, larger models have the potential to leak significantly more information. This is especially concerning given the current trend of ever-increasing model sizes. Finally, we observe the effect of attempting a best-effort reconstruction by establishing a rank threshold, as described in \cref{sec:seq_rec}, when the gradients are of full rank. This allows \tool to achieve a ROUGE-1 score of 30.3 (instead of 0) for GPT-2 on CoLA $B=128$. A thorough ablation study on the advantage of this heuristic can be found in \cref{app:rank_ablation}.
\begin{table*}[t]\centering
	
	\caption{Main experiments on the GPT-2\textsubscript{BASE} and LLaMa-2 (7B) models with higher batch sizes on various datasets. R-1 and R-2 denote the ROUGE-1 and ROUGE-2 scores respectively.} \label{table:main}
	\newcommand{\twocol}[1]{\multicolumn{2}{c}{#1}}
	\newcommand{\threecol}[1]{\multicolumn{3}{c}{#1}}
	\newcommand{\fivecol}[1]{\multicolumn{5}{c}{#1}}
	\newcommand{\ninecol}[1]{\multicolumn{9}{c}{#1}}
	
	\newcommand{\bsz}{Batch Size~}
	\newcommand{\certified}{{CR(\%)}}
	
	\renewcommand{\arraystretch}{1.2}
	
	\newcommand{\ccellt}[2]{\colorbox{#1}{\makebox(20,8){{#2}}}}
	\newcommand{\ccellc}[2]{\colorbox{#1}{\makebox(8,8){{#2}}}}
	\newcommand{\ccells}[2]{\colorbox{#1}{\makebox(55,8){{#2}}}}
	
	\newcommand{\temp}[1]{\textcolor{red}{#1}}
	\newcommand{\noopcite}[1]{} 
	
	\newcommand{\skiplen}{0.01\linewidth} 
	\newcommand{\rlen}{0.01\linewidth} 
	\newcolumntype{R}{>{$}r<{$}}
	\resizebox{\linewidth}{!}{
		\begin{tabular}{@{}c l RR p{\skiplen}  RR p{\skiplen} RR p{\skiplen}  RR @{}} \toprule

			&& \twocol{$B=16$} && \twocol{$B=32$} && \twocol{$B=64$} && \twocol{$B=128$} \\
			
			\cmidrule(l{5pt}r{5pt}){3-4} \cmidrule(l{5pt}r{5pt}){6-7} \cmidrule(l{5pt}r{5pt}){9-10} \cmidrule(l{5pt}r{5pt}){12-13} 
			
			&& \multicolumn{1}{c}{\text{R-1}} & \multicolumn{1}{c}{\text{R-2}}  &&\multicolumn{1}{c}{\text{R-1}} & \multicolumn{1}{c}{\text{R-2}}  && \multicolumn{1}{c}{\text{R-1}} & \multicolumn{1}{c}{\text{R-2}}  && \multicolumn{1}{c}{\text{R-1}} & \multicolumn{1}{c}{\text{R-2}}  \\
			\midrule
			\multirow{2}{*}{CoLA}
			
			 &GPT-2 & \mathbf{100.0\pm0.0} & \mathbf{100.0\pm0.0}  && \mathbf{100.0\pm0.0} & \mathbf{100.0\pm0.0} && \mathbf{100.0\pm0.0} & \mathbf{100.0\pm0.0} && 30.3\pm1.0 & 14.6\pm0.9 \\

			&LLaMa-2 (7B) & \mathbf{100.0\pm0.0} & \mathbf{100.0\pm0.0}  && \mathbf{100.0\pm0.0} & \mathbf{100.0\pm0.0} && 99.9^{+0.0}_{-0.1} &  99.9^{+0.0}_{-0.1} && \mathbf{99.5\pm0.2} & \mathbf{99.3\pm0.3} \\
			
			\midrule
			\multirow{2}{*}{SST-2}&
			
			GPT-2 &  \mathbf{100.0\pm0.0} & 94.6\pm1.1 && \mathbf{100.0^{+0.0}_{-0.1}} & 93.4\pm1.0 && 92.9\pm3.0 & 85.0\pm3.5 && 13.7\pm1.4 & 4.3\pm0.5 \\

			& LLaMa-2 (7B) &  \mathbf{100.0\pm0.0} & \mathbf{100.0\pm0.0}  &&  99.9^{+0.0}_{-0.1} & \mathbf{99.9\pm0.1} && \mathbf{99.9\pm0.1} &  \mathbf{99.9\pm0.1} && \mathbf{98.2\pm0.4} & \mathbf{97.8\pm0.4} \\
			
			\midrule
			\multirow{2}{*}{\vtop{\hbox{\strut ~~Rotten}\hbox{\strut Tomatoes}}} 
			
			 &GPT-2 & \mathbf{100.0\pm0.0} & 99.9^{+0.1}_{-0.3} && 98.0\pm1.7 & 97.8\pm1.8 && 2.8\pm1.1 & 1.1\pm0.4 && 0.0\pm0.0 & 0.0\pm0.0 \\
			 
			 & LLaMa-2 (7B) & \mathbf{100.0^{+0.0}_{-0.1}} & \mathbf{100.0^{+0.0}_{-0.1}}  && \mathbf{100.0\pm0.0} & \mathbf{100.0\pm0.0} && \mathbf{97.9\pm0.5} &  \mathbf{97.8\pm0.5} && \mathbf{99.7^{+0.1}_{-0.2}} & \mathbf{99.7^{+0.2}_{-0.3}} \\
			
			\bottomrule
		\end{tabular}
	}
	\vspace{-1.2em}
\end{table*}

\paragraph{Reconstruction under FedAvg} The FedAvg algorithm \cite{mcmahan2017communication} is among the most widely used protocols in federated learning. It features $E$ training epochs on minibatches of size $B_{mini}<B$ with a fixed learning rate $\eta$.  Despite featuring multiple low-rank gradient updates, this setting it remains vulnerable to our attack, as we elaborate in \cref{app:fedavg}. We show in \cref{tab:fed_avg} that FedAvg is susceptible to gradient leakage under \tool for a range of reasonable learning rates and number of epochs.
\begin{table}[!hbt]
    \centering
    \vspace{-1mm}
    \caption{Experiments on the FedAVG setting on the GPT-2 model with a batch size of 16 on the Rotten Tomatoes dataset. We use default set of hyperparameters of $E=10$ epochs, learning rate $\eta=10^{-4}$ and mini-batch size $B_{mini}=4$. R-1 and R-2 denote ROUGE-1 and ROUGE-2 respectively.}
    \vspace{-2.5mm}
    \label{tab:fed_avg}
    
    \resizebox{0.95\linewidth}{!}{
    \begin{tabular}{cccccc ccc}
        \cmidrule(l{5pt}r{5pt}){1-3} \cmidrule(l{5pt}r{5pt}){4-6} \cmidrule(l{5pt}r{5pt}){7-9}
        \textbf{E}&\textbf{R-1}&\textbf{R-2} & $\bm{\eta}$&\textbf{R-1}&\textbf{R-2} &$\bm{B_{mini}}$&\textbf{R-1}&\textbf{R-2}\\
        \cmidrule(l{5pt}r{5pt}){1-3} \cmidrule(l{5pt}r{5pt}){4-6} \cmidrule(l{5pt}r{5pt}){7-9}
         2&$98.4\pm0.9$&$98.0\pm1.0$&$10^{-5}$&$100.0^{+0.0}_{-0.2}$&$99.8^{+0.2}_{-0.4}$&2&$93.2\pm1.7$&$92.3\pm1.9$\\

         5&$97.3\pm1.2$&$96.8\pm1.3$&$5\times10^{-5}$&$99.8^{+0.2}_{-0.5}$&$99.6^{+0.3}_{-0.7}$&4&$95.4\pm1.6$&$94.7\pm1.7$\\

         10&$95.4\pm1.6$&$94.7\pm1.7$&$10^{-4}$&$95.4\pm1.6$&$94.7\pm1.7$&8&$98.6^{+0.5}_{-0.9}$&$98.2^{+0.7}_{-1.0}$\\

         20&$96.0\pm1.4$&$95.3\pm1.6$&$5\times10^{-4}$&$84.2\pm1.8$&$82.2\pm1.9$&16&$100.0\pm0.0$&$99.8^{+0.2}_{-0.3}$\\
        \cmidrule(l{5pt}r{5pt}){1-3} \cmidrule(l{5pt}r{5pt}){4-6} \cmidrule(l{5pt}r{5pt}){7-9}
    \end{tabular}
    }
    \vspace{-7mm}
\end{table}

\paragraph{Effect of Fine-tuning Methods}
We further demonstrate \tool's versatility across a range of pretraining paradigms, including quantized models and Low-Rank Adaptation (LoRA) \citep{lora} finetuning. For both LLaMa-3 70B with $B=1$ and LLaMa-3.1 8B with $B=32$ at 16-bit quantization, we observed excellent ROUGE-1 and ROUGE-2 scores (>99\%) (see \cref{tab:misc} in \cref{app:misc}). We also present near-exact reconstructions under LoRA training, as \tool can be directly applied to the decomposed weight matrix, with further technical specifics detailed in \cref{app:lora}. With LoRA updates of rank $r=256$, which is standard for the LLaMa-2 model as noted by \citet{biderman2024lora}, we observe ROUGE-1 and ROUGE-2 scores in the region of $94-95\%$, given in \cref{tab:misc}. These results reaffirm that \tool is applicable to common fine-tuning methods. 
\paragraph{Effect of Model Size and Training on Reconstruction}
Prior work~\citep{lamp,Deng2021TAGGA} suggests that the size of a model, as well as, the degree of pre-training significantly affects the amount of leaked client information. To this end, in \cref{table:add_exp} we evaluate \tool on the larger (GPT-2\textsubscript{LARGE}) and pre-trained for 2 epochs (GPT-2\textsubscript{FineTuned}) variants of GPT-2 on the RT dataset for batch sizes up to 128. We observe very little difference in performance. In fact, the GPT-2\textsubscript{LARGE}'s larger embedding dimension allows us to approximately reconstruct more tokens at larger batch sizes. We further note that a larger vocabulary does not negatively impact \tool, as can be seen from the applications on LLaMa3.1-8B and LLaMa3.1-70B (see \cref{tab:misc}), which feature a vocabulary size of 128,256 tokens.

\paragraph{Reconstruction under Next-Token Prediction}
Additionally, we evaluate our model on the next-token prediction task to demonstrate \tool's efficacy under different contexts. We again achieve near-perfect results with ROUGE-1/2 scores of $>99$. \tool does not reach perfect scores because the last token in each client sequence only acts as a target and it is, thus, masked out from the input.
\begin{table*}[t]\centering
	
	\caption{Experiments on GPT-2 variations in different settings on the Rotten Tomatoes dataset. R-1 and R-2 denote the ROUGE-1 and ROUGE-2 scores respectively.} \label{table:add_exp}
	\newcommand{\twocol}[1]{\multicolumn{2}{c}{#1}}
	\newcommand{\threecol}[1]{\multicolumn{3}{c}{#1}}
	\newcommand{\fivecol}[1]{\multicolumn{5}{c}{#1}}
	\newcommand{\ninecol}[1]{\multicolumn{9}{c}{#1}}
	
	\newcommand{\bsz}{Batch Size~}
	\newcommand{\certified}{{CR(\%)}}
	
	\renewcommand{\arraystretch}{1.2}
	
	\newcommand{\ccellt}[2]{\colorbox{#1}{\makebox(20,8){{#2}}}}
	\newcommand{\ccellc}[2]{\colorbox{#1}{\makebox(8,8){{#2}}}}
	\newcommand{\ccells}[2]{\colorbox{#1}{\makebox(55,8){{#2}}}}
	
	\newcommand{\temp}[1]{\textcolor{red}{#1}}
	\newcommand{\noopcite}[1]{} 
	
	\newcommand{\skiplen}{0.01\linewidth} 
	\newcommand{\rlen}{0.01\linewidth} 
	\newcolumntype{R}{>{$}r<{$}}
	\resizebox{\linewidth}{!}{
		\begin{tabular}{@{}l RR p{\skiplen}  RR p{\skiplen} RR p{\skiplen}  RR @{}} \toprule

			& \twocol{$B=16$} && \twocol{$B=32$} && \twocol{$B=64$} && \twocol{$B=128$} \\
			
			\cmidrule(l{5pt}r{5pt}){2-3} \cmidrule(l{5pt}r{5pt}){5-6} \cmidrule(l{5pt}r{5pt}){8-9} \cmidrule(l{5pt}r{5pt}){11-12} 
			
			& \multicolumn{1}{c}{\text{R-1}} & \multicolumn{1}{c}{\text{R-2}}  &&\multicolumn{1}{c}{\text{R-1}} & \multicolumn{1}{c}{\text{R-2}}  && \multicolumn{1}{c}{\text{R-1}} & \multicolumn{1}{c}{\text{R-2}} && \multicolumn{1}{c}{\text{R-1}} & \multicolumn{1}{c}{\text{R-2}} \\
			\midrule
			
			GPT-2\textsubscript{BASE} & \mathbf{100.0\pm0.0} & \mathbf{99.9^{+0.1}_{-0.3}} && 98.0\pm1.7 & 97.8\pm1.8 && 2.8\pm1.1 &  1.1\pm0.4 && 0.0\pm0.0 & 0.0\pm0.0 \\
			\midrule
			
			GPT-2\textsubscript{FineTuned} & \mathbf{100.0\pm0.0} & 99.8^{+0.1}_{-0.3}  && 96.4\pm2.3 & 96.0\pm2.5 && 0.84\pm0.6 &  0.2^{+0.2}_{-0.1} && 0.0\pm0.0 & 0.0\pm0.0 \\
			
			GPT-2\textsubscript{NextToken} & 99.9\pm0.0 & 99.7^{+0.2}_{-0.3}  && 99.6^{+0.3}_{-0.9} & 99.4^{+0.3}_{-0.9} && 2.3\pm0.8 &  0.5^{+0.3}_{-0.2} && 0.0\pm0.0 & 0.0\pm0.0 \\
			
			GPT-2\textsubscript{LARGE} & \mathbf{100.0\pm0.0} & 99.8^{+0.1}_{-0.3}  && \mathbf{100.0\pm0.0} & \mathbf{99.9^{+0.1}_{-0.2}} && \mathbf{44.1\pm4.2} & \mathbf{38.1\pm4.7} && 0.0\pm0.0 & 0.0\pm0.0 \\
		
			\bottomrule
		\end{tabular}
	}
	\vspace{-1.8em}
\end{table*}

\paragraph{Reconstruction of Long Sequences}
Finally, to demonstrate our robustness to long sequences, we conducted a single experiment with $B=1$ on the ECHR dataset truncated to 512 tokens. We obtain a perfect score of $\mathbf{100.0\pm0.0}$ for ROUGE-1 and ROUGE-2 on GPT-2\textsubscript{BASE}, emphasizing the general applicability of \tool. In contrast, in the same setting LAMP achieves a ROUGE-1 of $10.1\pm2.3$.

\section{Limitations}
As discussed in \cref{sec:technical} and demonstrated in \cref{sec:experiments}, the performance of \tool on decoder-based models is only constrained by the embedding dimension $d$. While an exact reconstruction for a number of tokens $b>d$ is unachievable, we showed that the attack's effectiveness decreases only gradually with $b$. Given our robust performance in an undefended setting, an interesting avenue for future work is to improve \tool against different defense mechanisms, including but not limited to using the Differential Privacy SGD optimization process (DPSGD)\cite{abadi2016deep}. 

On the other hand, applying \tool on  encoder-based architectures for larger batches ($B>>8$) becomes challenging due to the high-order polynomial growth of the search space volume with respect to the batch size. These computational constraints make comprehensive exploration of the search space nearly impossible, thereby reducing the likelihood of achieving a feasible reconstruction. This issue extends to longer sequences, where the size of the search space expands exponentially with the maximum sequence length. To mitigate these effects, we propose that future research could focus on exploring further heuristics to efficiently reduce the search space.
\label{sec:limitations}

\section{Conclusion}
We introduced \tool, the first gradient inversion attack for transformers able to recover large batches of input text exactly. By exploiting the rank-deficiency of self-attention layer gradients and discreteness of the input space, we devised a greedy algorithm and a heuristic search approach for decoder-based and encoder-based architectures, respectively. Our results show that \tool achieves exact reconstruction for batch sizes up to $128$ and sequences up to $512$ tokens. We further demonstrate \tool's effectiveness across model sizes, architectures, degrees of pre-training, and federated learning algorithms, establishing the widespread applicability of our attack.

Our work demonstrates that recent decoder-based LLMs are particularly vulnerable to data leakage, allowing adversaries to recover very large batches and sequences in the absence of a robust defense mechanism. This underlying vulnerability highlights the need for increased awareness and development of effective countermeasures in privacy-critical applications. We hope this paper can facilitate further research into creating reliable frameworks for effective and private collaborative learning.

\pagebreak
\message{^^JLASTBODYPAGE \thepage^^J}
\subsubsection*{Acknowledgments}
This research was partially funded by the Ministry of Education and Science of Bulgaria (support for INSAIT, part of the Bulgarian National Roadmap for Research Infrastructure).

This work has been done as part of the EU grant ELSA (European Lighthouse on Secure and Safe AI, grant agreement no. 101070617) . Views and opinions expressed are however those of the authors only and do not necessarily reflect those of the European Union or European Commission. Neither the European Union nor the European Commission can be held responsible for them.

The work has received funding from the Swiss State Secretariat for Education, Research and Innovation (SERI).

\bibliographystyle{unsrtnat}
\bibliography{references}

\begin{thebibliography}{45}
\providecommand{\natexlab}[1]{#1}
\providecommand{\url}[1]{\texttt{#1}}
\expandafter\ifx\csname urlstyle\endcsname\relax
  \providecommand{\doi}[1]{doi: #1}\else
  \providecommand{\doi}{doi: \begingroup \urlstyle{rm}\Url}\fi

\bibitem[Zhang et~al.(2024)Zhang, Vahidian, Kuo, Li, Zhang, Yu, Wang, and Chen]{towardsfed}
Jianyi Zhang, Saeed Vahidian, Martin Kuo, Chunyuan Li, Ruiyi Zhang, Tong Yu, Guoyin Wang, and Yiran Chen.
\newblock Towards building the federatedgpt: Federated instruction tuning.
\newblock In \emph{ICASSP 2024-2024 IEEE International Conference on Acoustics, Speech and Signal Processing (ICASSP)}, pages 6915--6919. IEEE, 2024.

\bibitem[Chen et~al.(2023)Chen, Feng, Zhou, Yin, and Zheng]{federatedllm}
Chaochao Chen, Xiaohua Feng, Jun Zhou, Jianwei Yin, and Xiaolin Zheng.
\newblock Federated large language model: A position paper.
\newblock \emph{arXiv preprint arXiv:2307.08925}, 2023.

\bibitem[Babakniya et~al.(2023)Babakniya, Elkordy, Ezzeldin, Liu, Song, El-Khamy, and Avestimehr]{babakniya2023slora}
Sara Babakniya, Ahmed~Roushdy Elkordy, Yahya~H Ezzeldin, Qingfeng Liu, Kee-Bong Song, Mostafa El-Khamy, and Salman Avestimehr.
\newblock Slora: Federated parameter efficient fine-tuning of language models.
\newblock \emph{arXiv preprint arXiv:2308.06522}, 2023.

\bibitem[Zhang et~al.(2023)Zhang, Hu, Zhang, Zhang, Wang, Qu, and Xu]{fedlegal}
Zhuo Zhang, Xiangjing Hu, Jingyuan Zhang, Yating Zhang, Hui Wang, Lizhen Qu, and Zenglin Xu.
\newblock Fedlegal: The first real-world federated learning benchmark for legal nlp.
\newblock In \emph{Proceedings of the 61st Annual Meeting of the Association for Computational Linguistics (Volume 1: Long Papers)}, pages 3492--3507, 2023.

\bibitem[Sadilek et~al.(2021)Sadilek, Liu, Nguyen, Kamruzzaman, Serghiou, Rader, Ingerman, Mellem, Kairouz, Nsoesie, et~al.]{medicalfed}
Adam Sadilek, Luyang Liu, Dung Nguyen, Methun Kamruzzaman, Stylianos Serghiou, Benjamin Rader, Alex Ingerman, Stefan Mellem, Peter Kairouz, Elaine~O Nsoesie, et~al.
\newblock Privacy-first health research with federated learning.
\newblock \emph{NPJ digital medicine}, 4\penalty0 (1):\penalty0 132, 2021.

\bibitem[Zhu et~al.(2019)Zhu, Liu, and Han]{dlg}
Ligeng Zhu, Zhijian Liu, and Song Han.
\newblock Deep leakage from gradients.
\newblock In \emph{NeurIPS}, 2019.

\bibitem[Geiping et~al.(2020)Geiping, Bauermeister, Dr{\"o}ge, and Moeller]{geiping}
Jonas Geiping, Hartmut Bauermeister, Hannah Dr{\"o}ge, and Michael Moeller.
\newblock Inverting gradients-how easy is it to break privacy in federated learning?
\newblock \emph{NeurIPS}, 2020.

\bibitem[Yin et~al.(2021)Yin, Mallya, Vahdat, Alvarez, Kautz, and Molchanov]{nvidia}
Hongxu Yin, Arun Mallya, Arash Vahdat, Jose~M. Alvarez, Jan Kautz, and Pavlo Molchanov.
\newblock See through gradients: Image batch recovery via gradinversion.
\newblock In \emph{{CVPR}}, 2021.

\bibitem[Li et~al.(2022)Li, Zhang, Liu, and Liu]{ggl}
Zhuohang Li, Jiaxin Zhang, Luyang Liu, and Jian Liu.
\newblock Auditing privacy defenses in federated learning via generative gradient leakage.
\newblock In \emph{Proceedings of the IEEE/CVF Conference on Computer Vision and Pattern Recognition}, pages 10132--10142, 2022.

\bibitem[Deng et~al.(2021)Deng, Wang, Li, Wang, Shang, Liu, Rajasekaran, and Ding]{Deng2021TAGGA}
Jieren Deng, Yijue Wang, Ji~Li, Chenghong Wang, Chao Shang, Hang Liu, Sanguthevar Rajasekaran, and Caiwen Ding.
\newblock Tag: Gradient attack on transformer-based language models.
\newblock In \emph{Conference on Empirical Methods in Natural Language Processing}, 2021.

\bibitem[Balunović et~al.(2022)Balunović, Dimitrov, Jovanović, and Vechev]{lamp}
Mislav Balunović, Dimitar~I. Dimitrov, Nikola Jovanović, and Martin~T. Vechev.
\newblock {LAMP:} extracting text from gradients with language model priors.
\newblock In \emph{NeurIPS}, 2022.

\bibitem[Radford et~al.(2019)Radford, Wu, Child, Luan, Amodei, Sutskever, et~al.]{radford2019language}
Alec Radford, Jeffrey Wu, Rewon Child, David Luan, Dario Amodei, Ilya Sutskever, et~al.
\newblock Language models are unsupervised multitask learners.
\newblock \emph{OpenAI blog}, 1\penalty0 (8):\penalty0 9, 2019.

\bibitem[Touvron et~al.(2023)Touvron, Martin, Stone, Albert, Almahairi, Babaei, Bashlykov, Batra, Bhargava, Bhosale, et~al.]{touvron2023llama}
Hugo Touvron, Louis Martin, Kevin Stone, Peter Albert, Amjad Almahairi, Yasmine Babaei, Nikolay Bashlykov, Soumya Batra, Prajjwal Bhargava, Shruti Bhosale, et~al.
\newblock Llama 2: Open foundation and fine-tuned chat models.
\newblock \emph{arXiv preprint arXiv:2307.09288}, 2023.

\bibitem[Devlin et~al.(2018)Devlin, Chang, Lee, and Toutanova]{devlin2018bert}
Jacob Devlin, Ming-Wei Chang, Kenton Lee, and Kristina Toutanova.
\newblock Bert: Pre-training of deep bidirectional transformers for language understanding.
\newblock \emph{arXiv preprint arXiv:1810.04805}, 2018.

\bibitem[Warstadt et~al.(2018)Warstadt, Singh, and Bowman]{warstadt2018neural}
Alex Warstadt, Amanpreet Singh, and Samuel~R. Bowman.
\newblock Neural network acceptability judgments.
\newblock \emph{arXiv preprint 1805.12471}, 2018.

\bibitem[Socher et~al.(2013)Socher, Perelygin, Wu, Chuang, Manning, Ng, and Potts]{socher2013recursive}
Richard Socher, Alex Perelygin, Jean Wu, Jason Chuang, Christopher~D Manning, Andrew Ng, and Christopher Potts.
\newblock Recursive deep models for semantic compositionality over a sentiment treebank.
\newblock In \emph{Proceedings of EMNLP}, pages 1631--1642, 2013.

\bibitem[Pang and Lee(2005)]{Pang+Lee:05a}
Bo~Pang and Lillian Lee.
\newblock Seeing stars: Exploiting class relationships for sentiment categorization with respect to rating scales.
\newblock In \emph{Proceedings of the ACL}, 2005.

\bibitem[Chalkidis et~al.(2019)Chalkidis, Androutsopoulos, and Aletras]{chalkidis-etal-2019-neural}
Ilias Chalkidis, Ion Androutsopoulos, and Nikolaos Aletras.
\newblock Neural legal judgment prediction in {E}nglish.
\newblock In Anna Korhonen, David Traum, and Llu{\'\i}s M{\`a}rquez, editors, \emph{Proceedings of the 57th Annual Meeting of the Association for Computational Linguistics}, pages 4317--4323, July 2019.

\bibitem[McMahan et~al.(2017{\natexlab{a}})McMahan, Moore, Ramage, Hampson, and y~Arcas]{mcmahan2017communication}
Brendan McMahan, Eider Moore, Daniel Ramage, Seth Hampson, and Blaise~Aguera y~Arcas.
\newblock Communication-efficient learning of deep networks from decentralized data.
\newblock In \emph{Artificial intelligence and statistics}, pages 1273--1282. PMLR, 2017{\natexlab{a}}.

\bibitem[Hu et~al.(2021)Hu, Shen, Wallis, Allen-Zhu, Li, Wang, Wang, and Chen]{lora}
Edward~J Hu, Yelong Shen, Phillip Wallis, Zeyuan Allen-Zhu, Yuanzhi Li, Shean Wang, Lu~Wang, and Weizhu Chen.
\newblock Lora: Low-rank adaptation of large language models.
\newblock \emph{arXiv preprint arXiv:2106.09685}, 2021.

\bibitem[Jacob et~al.(2018)Jacob, Kligys, Chen, Zhu, Tang, Howard, Adam, and Kalenichenko]{jacob2018quantization}
Benoit Jacob, Skirmantas Kligys, Bo~Chen, Menglong Zhu, Matthew Tang, Andrew Howard, Hartwig Adam, and Dmitry Kalenichenko.
\newblock Quantization and training of neural networks for efficient integer-arithmetic-only inference.
\newblock In \emph{Proceedings of the IEEE conference on computer vision and pattern recognition}, pages 2704--2713, 2018.

\bibitem[Phong et~al.(2018)Phong, Aono, Hayashi, Wang, and Moriai]{analyticPhong}
Le~Trieu Phong, Yoshinori Aono, Takuya Hayashi, Lihua Wang, and Shiho Moriai.
\newblock Privacy-preserving deep learning via additively homomorphic encryption.
\newblock \emph{{IEEE} Trans. Inf. Forensics Secur.}, \penalty0 (5), 2018.

\bibitem[Zhao et~al.(2020)Zhao, Mopuri, and Bilen]{idlg}
Bo~Zhao, Konda~Reddy Mopuri, and Hakan Bilen.
\newblock idlg: Improved deep leakage from gradients.
\newblock \emph{arXiv}, 2020.

\bibitem[Geng et~al.(2021)Geng, Mou, Li, Li, Beyan, Decker, and Rong]{aaai}
Jiahui Geng, Yongli Mou, Feifei Li, Qing Li, Oya Beyan, Stefan Decker, and Chunming Rong.
\newblock Towards general deep leakage in federated learning.
\newblock \emph{arXiv}, 2021.

\bibitem[Gupta et~al.(2022)Gupta, Huang, Zhong, Gao, Li, and Chen]{princeton}
Samyak Gupta, Yangsibo Huang, Zexuan Zhong, Tianyu Gao, Kai Li, and Danqi Chen.
\newblock Recovering private text in federated learning of language models.
\newblock In \emph{NeurIPS}, 2022.

\bibitem[Lu et~al.(2022)Lu, Zhang, Zhao, He, and Cheng]{APRIL}
Jiahao Lu, Xi~Sheryl Zhang, Tianli Zhao, Xiangyu He, and Jian Cheng.
\newblock April: Finding the achilles' heel on privacy for vision transformers.
\newblock In \emph{Proceedings of the IEEE/CVF Conference on Computer Vision and Pattern Recognition}, pages 10051--10060, 2022.

\bibitem[Zhu and Blaschko(2021)]{rgap}
Junyi Zhu and Matthew~B. Blaschko.
\newblock {R-GAP:} recursive gradient attack on privacy.
\newblock In \emph{{ICLR}}, 2021.

\bibitem[Dimitrov et~al.(2024)Dimitrov, Baader, M{\"u}ller, and Vechev]{spear}
Dimitar~I Dimitrov, Maximilian Baader, Mark~Niklas M{\"u}ller, and Martin Vechev.
\newblock Spear: Exact gradient inversion of batches in federated learning.
\newblock \emph{arXiv preprint arXiv:2403.03945}, 2024.

\bibitem[Boenisch et~al.(2021)Boenisch, Dziedzic, Schuster, Shamsabadi, Shumailov, and Papernot]{cah}
Franziska Boenisch, Adam Dziedzic, Roei Schuster, Ali~Shahin Shamsabadi, Ilia Shumailov, and Nicolas Papernot.
\newblock When the curious abandon honesty: Federated learning is not private.
\newblock \emph{arXiv}, 2021.

\bibitem[Fowl et~al.(2022{\natexlab{a}})Fowl, Geiping, Czaja, Goldblum, and Goldstein]{rtf}
Liam~H. Fowl, Jonas Geiping, Wojciech Czaja, Micah Goldblum, and Tom Goldstein.
\newblock Robbing the fed: Directly obtaining private data in federated learning with modified models.
\newblock In \emph{{ICLR}}, 2022{\natexlab{a}}.

\bibitem[Fowl et~al.(2022{\natexlab{b}})Fowl, Geiping, Reich, Wen, Czaja, Goldblum, and Goldstein]{decepticons}
Liam Fowl, Jonas Geiping, Steven Reich, Yuxin Wen, Wojtek Czaja, Micah Goldblum, and Tom Goldstein.
\newblock Decepticons: Corrupted transformers breach privacy in federated learning for language models.
\newblock \emph{ICLR}, 2022{\natexlab{b}}.

\bibitem[Chu et~al.(2023)Chu, Geiping, Fowl, Goldblum, and Goldstein]{panning}
Hong-Min Chu, Jonas Geiping, Liam~H Fowl, Micah Goldblum, and Tom Goldstein.
\newblock Panning for gold in federated learning: Targeted text extraction under arbitrarily large-scale aggregation.
\newblock \emph{ICLR}, 2023.

\bibitem[Wen et~al.(2022)Wen, Geiping, Fowl, Goldblum, and Goldstein]{fishing}
Yuxin Wen, Jonas Geiping, Liam Fowl, Micah Goldblum, and Tom Goldstein.
\newblock Fishing for user data in large-batch federated learning via gradient magnification.
\newblock In \emph{{ICML}}, 2022.

\bibitem[Li et~al.(2023)Li, Liu, and Lei]{li2023beyond}
Jianwei Li, Sheng Liu, and Qi~Lei.
\newblock Beyond gradient and priors in privacy attacks: Leveraging pooler layer inputs of language models in federated learning.
\newblock \emph{arXiv preprint arXiv:2312.05720}, 2023.

\bibitem[Xu et~al.(2023)Xu, Wang, Ohrimenko, and Cohn]{flat}
Qiongkai Xu, Jun Wang, Olga Ohrimenko, and Trevor Cohn.
\newblock Flat-chat: A word recovery attack on federated language model training.
\newblock 2023.

\bibitem[McMahan et~al.(2017{\natexlab{b}})McMahan, Moore, Ramage, Hampson, and y~Arcas]{fedsgd}
Brendan McMahan, Eider Moore, Daniel Ramage, Seth Hampson, and Blaise~Ag{\"{u}}era y~Arcas.
\newblock Communication-efficient learning of deep networks from decentralized data.
\newblock In \emph{{AISTATS}}, 2017{\natexlab{b}}.

\bibitem[Dubey et~al.(2024)Dubey, Jauhri, Pandey, Kadian, Al{-}Dahle, Letman, Mathur, Schelten, Yang, Fan, Goyal, Hartshorn, Yang, Mitra, Sravankumar, Korenev, Hinsvark, Rao, Zhang, Rodriguez, Gregerson, Spataru, Rozi{\`{e}}re, Biron, Tang, Chern, Caucheteux, Nayak, Bi, Marra, McConnell, Keller, Touret, Wu, Wong, Ferrer, Nikolaidis, Allonsius, Song, Pintz, Livshits, Esiobu, Choudhary, Mahajan, Garcia{-}Olano, Perino, Hupkes, Lakomkin, AlBadawy, Lobanova, Dinan, Smith, Radenovic, Zhang, Synnaeve, Lee, Anderson, Nail, Mialon, Pang, Cucurell, Nguyen, Korevaar, Xu, Touvron, Zarov, Ibarra, Kloumann, Misra, Evtimov, Copet, Lee, Geffert, Vranes, Park, Mahadeokar, Shah, van~der Linde, Billock, Hong, Lee, Fu, Chi, Huang, Liu, Wang, Yu, Bitton, Spisak, Park, Rocca, Johnstun, Saxe, Jia, Alwala, Upasani, Plawiak, Li, Heafield, Stone, and et~al.]{llama3}
Abhimanyu Dubey, Abhinav Jauhri, Abhinav Pandey, Abhishek Kadian, Ahmad Al{-}Dahle, Aiesha Letman, Akhil Mathur, Alan Schelten, Amy Yang, Angela Fan, Anirudh Goyal, Anthony Hartshorn, Aobo Yang, Archi Mitra, Archie Sravankumar, Artem Korenev, Arthur Hinsvark, Arun Rao, Aston Zhang, Aur{\'{e}}lien Rodriguez, Austen Gregerson, Ava Spataru, Baptiste Rozi{\`{e}}re, Bethany Biron, Binh Tang, Bobbie Chern, Charlotte Caucheteux, Chaya Nayak, Chloe Bi, Chris Marra, Chris McConnell, Christian Keller, Christophe Touret, Chunyang Wu, Corinne Wong, Cristian~Canton Ferrer, Cyrus Nikolaidis, Damien Allonsius, Daniel Song, Danielle Pintz, Danny Livshits, David Esiobu, Dhruv Choudhary, Dhruv Mahajan, Diego Garcia{-}Olano, Diego Perino, Dieuwke Hupkes, Egor Lakomkin, Ehab AlBadawy, Elina Lobanova, Emily Dinan, Eric~Michael Smith, Filip Radenovic, Frank Zhang, Gabriel Synnaeve, Gabrielle Lee, Georgia~Lewis Anderson, Graeme Nail, Gr{\'{e}}goire Mialon, Guan Pang, Guillem Cucurell, Hailey Nguyen, Hannah Korevaar, Hu~Xu, Hugo Touvron, Iliyan Zarov, Imanol~Arrieta Ibarra, Isabel~M. Kloumann, Ishan Misra, Ivan Evtimov, Jade Copet, Jaewon Lee, Jan Geffert, Jana Vranes, Jason Park, Jay Mahadeokar, Jeet Shah, Jelmer van~der Linde, Jennifer Billock, Jenny Hong, Jenya Lee, Jeremy Fu, Jianfeng Chi, Jianyu Huang, Jiawen Liu, Jie Wang, Jiecao Yu, Joanna Bitton, Joe Spisak, Jongsoo Park, Joseph Rocca, Joshua Johnstun, Joshua Saxe, Junteng Jia, Kalyan~Vasuden Alwala, Kartikeya Upasani, Kate Plawiak, Ke~Li, Kenneth Heafield, Kevin Stone, and et~al.
\newblock The llama 3 herd of models.
\newblock \emph{CoRR}, abs/2407.21783, 2024.
\newblock \doi{10.48550/ARXIV.2407.21783}.
\newblock URL \url{https://doi.org/10.48550/arXiv.2407.21783}.

\bibitem[Lin(2004)]{lin2004rouge}
Chin-Yew Lin.
\newblock Rouge: A package for automatic evaluation of summaries.
\newblock In \emph{Text summarization branches out}, pages 74--81, 2004.

\bibitem[Biderman et~al.(2024)Biderman, Ortiz, Portes, Paul, Greengard, Jennings, King, Havens, Chiley, Frankle, et~al.]{biderman2024lora}
Dan Biderman, Jose~Gonzalez Ortiz, Jacob Portes, Mansheej Paul, Philip Greengard, Connor Jennings, Daniel King, Sam Havens, Vitaliy Chiley, Jonathan Frankle, et~al.
\newblock Lora learns less and forgets less.
\newblock \emph{arXiv preprint arXiv:2405.09673}, 2024.

\bibitem[Abadi et~al.(2016{\natexlab{a}})Abadi, Chu, Goodfellow, McMahan, Mironov, Talwar, and Zhang]{abadi2016deep}
Martin Abadi, Andy Chu, Ian Goodfellow, H~Brendan McMahan, Ilya Mironov, Kunal Talwar, and Li~Zhang.
\newblock Deep learning with differential privacy.
\newblock In \emph{Proceedings of the 2016 ACM SIGSAC conference on computer and communications security}, pages 308--318, 2016{\natexlab{a}}.

\bibitem[Wolf et~al.(2019)Wolf, Debut, Sanh, Chaumond, Delangue, Moi, Cistac, Rault, Louf, Funtowicz, et~al.]{wolf2019huggingface}
Thomas Wolf, Lysandre Debut, Victor Sanh, Julien Chaumond, Clement Delangue, Anthony Moi, Pierric Cistac, Tim Rault, R{\'e}mi Louf, Morgan Funtowicz, et~al.
\newblock Huggingface's transformers: State-of-the-art natural language processing.
\newblock \emph{arXiv preprint arXiv:1910.03771}, 2019.

\bibitem[Minaee et~al.(2021)Minaee, Kalchbrenner, Cambria, Nikzad, Chenaghlu, and Gao]{minaee2021deep}
Shervin Minaee, Nal Kalchbrenner, Erik Cambria, Narjes Nikzad, Meysam Chenaghlu, and Jianfeng Gao.
\newblock Deep learning--based text classification: a comprehensive review.
\newblock \emph{ACM computing surveys (CSUR)}, 54\penalty0 (3):\penalty0 1--40, 2021.

\bibitem[Wang et~al.(2019)Wang, Singh, Michael, Hill, Levy, and Bowman]{wang2019glue}
Alex Wang, Amanpreet Singh, Julian Michael, Felix Hill, Omer Levy, and Samuel~R. Bowman.
\newblock {GLUE}: A multi-task benchmark and analysis platform for natural language understanding.
\newblock 2019.
\newblock In the Proceedings of ICLR.

\bibitem[Paszke et~al.(2019)Paszke, Gross, Massa, Lerer, Bradbury, Chanan, Killeen, Lin, Gimelshein, Antiga, et~al.]{paszke2019pytorch}
Adam Paszke, Sam Gross, Francisco Massa, Adam Lerer, James Bradbury, Gregory Chanan, Trevor Killeen, Zeming Lin, Natalia Gimelshein, Luca Antiga, et~al.
\newblock Pytorch: An imperative style, high-performance deep learning library.
\newblock \emph{Advances in neural information processing systems}, 32, 2019.

\bibitem[Abadi et~al.(2016{\natexlab{b}})Abadi, Chu, Goodfellow, McMahan, Mironov, Talwar, and Zhang]{dp}
Martin Abadi, Andy Chu, Ian Goodfellow, H~Brendan McMahan, Ilya Mironov, Kunal Talwar, and Li~Zhang.
\newblock Deep learning with differential privacy.
\newblock In \emph{Proceedings of the 2016 ACM SIGSAC conference on computer and communications security}, pages 308--318, 2016{\natexlab{b}}.

\end{thebibliography}

\message{^^JLASTREFERENCESPAGE \thepage^^J}

\ifincludeappendixx
	\clearpage
	\appendix
	\section{Broader Impact}
\label{app:impact}
In this work, we demonstrate that it is possible to exactly reconstruct large batches of textual data from gradients in the honest-but-curious setting. Our findings are widely applicable across different transformer-based LLM architectures, showing that, in contrast to prior belief, language transformers are actually more susceptible to gradient leakage attacks than other architectures. While our work naturally substantially increases the privacy risks posed to federated learning clients training LLMs, we also believe that sharing our work is crucial for finding future solutions to the issues we uncover. 

Importantly, we find that the recent decoder-based models are much more susceptible to gradient leakage attacks due to the causal nature of their self-attention masks. Our work implies that in the absence of proper defense mechanisms, receiving gradients from those models is essentially equivalent to receiving the client data directly. Further, we show that the attacker's ability to mount \tool grows with the embedding size $d$, suggesting the privacy risks posed by \tool in practical settings will only grow over time. With these considerations in mind, we emphasize the importance of providing privacy safeguards via secure aggregation, larger batch sizes, or gradient perturbations.
\section{Additional Technical Details of Our Method}
\label{app:additional_technical}
\subsection{Deferred Proofs}
\label{app:deferred_proofs}
\paragraph{}
\spanineq*
\begin{proof}
	We split the proof into two parts. We first prove $\ColSpan(\frac{\partial \mathcal{L}}{\partial \mW^Q_l})\subseteq \RowSpan(\mZ_l)$ and then prove that $\rank(\frac{\partial \mathcal{L}}{\partial \mW^Q_l})=\rank(\mZ_l)$, thus implying the two spaces are the same.
	
	For the first part of the proof, we observe that due to the matrix multiplication in $\frac{\partial \mathcal{L}}{\partial \mW^Q_l} = \mZ_l^T \frac{\partial \mathcal{L}}{\partial \mQ_l}$ all columns of $\frac{\partial \mathcal{L}}{\partial \mW^Q_l}$ are linear combinations of the columns of $\mZ_l^T$ with coefficients given by $\frac{\partial \mathcal{L}}{\partial \mQ_l}$. Thus, $\ColSpan(\frac{\partial \mathcal{L}}{\partial \mW^Q_l}) \subseteq \ColSpan(\mZ_l^T) = \RowSpan(\mZ_l)$.
	
	For the second part of the proof, we observe that since $\frac{\partial \mathcal{L}}{\partial \mQ_l}$ is of rank $b$ and $\rank(\mZ_l^T)$ is at most $b$, $\rank(\frac{\partial \mathcal{L}}{\partial \mW^Q_l}) = \min(\rank(\mZ_l^T),\rank(\frac{\partial \mathcal{L}}{\partial \mQ_l})) = \rank(\mZ_l^T)$. This finishes the proof.
\end{proof}
\paragraph{}
\almostsurely*
\begin{proof}
	As $\frac{\partial \mathcal{L}}{\partial \mW^Q_l}$ is rank-deficient (\cref{theorem:dLdW_dLdZ.XT}), $\ColSpan(\frac{\partial \mathcal{L}}{\partial \mW^Q_l})$ is a strict linear subspace of $\mathbb{R}^d$. Thus, it has a hypervolume $0$ via the Sard's lemma. This directly implies that the probability of a random vector $\in\mathbb{R}^d$ to be part of $\ColSpan(\frac{\partial \mathcal{L}}{\partial \mW^Q_l})$ is almost surely $0$.
\end{proof}	
\subsection{Technical Assumptions of \tool}
\label{app:assumptions}
Below we present a brief commentary on the technical assumptions of \tool. \tool makes three assumptions:
\begin{itemize}
	\item We assume that $\frac{\partial\mathcal{L}}{\partial\mQ_l}$ is full-rank.
	\item We require the total number of tokens $b$ in the batch to be smaller than the embedding dimension $d$, ensuring that $\frac{\partial\mathcal{L}}{\partial\mW^Q_l}$ is of low-rank.
	\item We assume a known discrete set of possible inputs to the model, i.e. its vocabulary.
\end{itemize}

Importantly, DAGER does not assume any prior knowledge of the labels or the lengths of each sequence in the batch nor access to the gradients of the embedding layers which have been shown to leak significant information \citep{APRIL}. Further, we require no language priors and operate under the honest-but-curious setting which does not allow malicious changes to model weights. Finally, sub-differentiability is sufficient for applying DAGER.

In practice, DAGER requires much fewer assumptions than existing works in the honest-but-curious setting while being successful in a variety of common LLM tasks, e.g., next-token prediction and sentence classification. While these tasks use the cross-entropy loss, DAGER can be applied to any loss function that non-trivially depends on every input token. This ensures the full-rankness of $\frac{\partial\mathcal{L}}{\partial\mW^Q_l}$.

To confirm its generality, we apply DAGER with a Frobenius norm-based loss and ReLU activation functions. We use a custom loss function $\mathcal{L}(s_1, s_2, \dots, s_P) = \|{\vf^L(s_1, s_2,\dots, s_P)}\|_F$, where $\|.\|_F$ is the Frobenius norm, which is equivalent to an MSE loss with $\bm{0}$ as a target vector. We report the results of applying DAGER with these modifications on Rotten Tomatoes using GPT-2 with $B=16$ in \cref{tab:misc}, achieving ROUGE-1 and ROUGE-2 scores of >99\% in both cases.

\subsection{\tool under the FedAvg protocol}
\label{app:fedavg}
We establish that \tool can be effectively used to attack clients employing FedAvg updates under mild assumptions. First, we show that it is possible to theoretically apply \cref{theorem:spanineq} to the protocol directly on the first layer, and under reasonably low model updates for further layers. We then demonstrate experimentally that \tool can be successfully applied across a wide range of parameters, namely the number of epochs $E$, learning rate $\eta$ and mini-batch size $b_{mini}$.

\paragraph{Spanchecks for FedAvg updates}
Let $\mX_l^e = f^l(s_1^e, s_2^e,\dots,s^e_{b_{mini}})$ denote the input embedding vectors to the relevant $l$-th layer query projection matrix for a mini-batch of size $b_{mini}$. Obtaining $\mX_l^e$ in the forward pass is equivalent to sampling the corresponding rows in the full-batch representation $\mX_l = f^l(s_1^e, s_2^e,\dots,s_B)$, as each sequence in a batch is independent from all the rest. Therefore, we can rewrite each local gradient update $\grad{\mW^Q_l} = {\mX_l^e}^T\grad{\mQ^e_l}$ as ${\mX_l}^T\grad{\mQ_l}$, where the $i$-th column of $\grad{\mQ_l}$ is the zero vector $\mathbf{0}$ if $s_i$ is not present in the mini-batch. Therefore, we are able to simply disregard the mini-batch sampling and focus only on the total number of iterations $\mathcal{E}$, assuming that every sequence in the original batch is sampled at least once.

Let the input embedding vectors to the $l$-th layer at timestep $t < \mathcal{E}$ be $\mX_l^t$. The final weight $\mW^\mathcal{E}_l$ after $\mathcal{E}$ steps can be written as:

\begin{equation}
\mW^\mathcal{E}_l = \mW^0_l - \eta\sum_{t=0}^\mathcal{E} \grad{\mW_l^t} = \mW^0_l - \eta\sum_{t=0}^\mathcal{E} {\mX_l^t}^T\grad{\mQ_l^t}
\end{equation}

Under the assumption that the changes in model weights are relatively small, we can approximate $\mX_l^t=\mX_l^0$. This is always the case for $l=0$, as the embeddings before the first layer are independent of the model weights. This allows us to rewrite $\mW_\mathcal{E}$ as:

\begin{equation}
	\mW^\mathcal{E}_l = \mW^0_l - \eta\sum_{t=0}^\mathcal{E} {\mX_l^t}^T\grad{\mQ_l^t} = \mW^0_l - \eta\sum_{t=0}^\mathcal{E} {\mX_l^0}^T\grad{\mQ_l^t} = \mW^0_l - {\mX_l^0}^T \big(\eta \sum_{t=0}^\mathcal{E}\grad{\mQ_l^t}\big)
\end{equation}

As the server has knowledge of the starting weight $\mW^0_l$ and the final weight $\mW^\mathcal{E}_l$, it is able to compute the sum of all gradient steps, i.e we will be able to apply \cref{theorem:spanineq} to ${\mX_l^0}^T (\eta \sum_{t=0}^\mathcal{E}\grad{\mQ_l^t})$. We still require $\sum_{t=0}^\mathcal{E}\grad{\mQ_l^t}$ to be full-rank, which is satisfied under standard \tool assumptions. 

\paragraph{Further experimental details} We further empircially demonstrate that \tool can effectively utilise the assumption of consistent feature embeddings across epochs under a reasonable learning rate and number of iterations. As shown in \cref{tab:fed_avg}, we observe near-exact reconstruction rates for most configurations, with metrics only slightly declining as the number of epochs increases. The key factor that influences the success of \tool is observed to be the learning rate $\eta$, as the aforementioned assumption might be invalidated at large $\eta$. However, learning rates exceeding $\eta \geq 10^{-3}$ are typically too high for the model to converge, particularly in multi-client settings. Therefore, we can conclude that \tool  is highly effective in the FedAvg context.

\subsection{\tool under LoRA training}
\label{app:lora}
In this section, we discuss how \tool can be extended to work on LoRA weight decomposition. Under LoRA, the linear layer weight updates for a weight $\mW \in \mathbb{R}^{d\times d}$ are performed on a low-rank representation: $\mW = \mW_0 + \mA\mB$, where $\mA \in \mathbb{R}^{d\times r}, \mB \in \mathbb{R}^{r\times d}$. As we obtain the gradient weights for both $\mA$ and $\mB$, we can apply \cref{theorem:spanineq} to $\mA$ with $\mZ_A = \mX\mA$, namely because $\grad{\mA} = \mX^T\grad{\mZ_A}$. Assuming that $\grad{\mX\mA}$ is full-rank and that $b<r$, our work is directly applicable. This can replace the spanchecks to be performed on $\mA$ instead of $\mW$ for each layer, after which \tool can be applied directly. In practice, LoRA finetuning typically initializes $\mW = \mW_0$ and $\mB$ to only contain zeroes which reduce the rank of $\grad{\mA}$ for the first few optimization steps. We therefore train the LLaMa-3.1 8B model on the Rotten Tomatoes dataset using a batch size of 4 with $r=256$ (following \citep{biderman2024lora}) for 3 epochs before applying \tool. We report results in \cref{tab:misc} and observe an excellent R1 and R2 of about 95\%.

\subsection{Complexity Analysis}
\label{app:comp_analysis}
A key point of \tool is the algorithm's exceptional computational efficiency on decoder-based models. In order to quantify the dependency of runtime on relevant variables, we describe the asymptotic complexity for both decoder- and encoder-based models. Below we list and prove several lemmas that assist us in the complete proof of our assertion for the complexity of \tool. When not specified, a batch size of $B=1$ is implied for any inputs. 
\begin{lemma}
	\label{lemma:matmul}
	The product of two matrices $\mM^1\in\mathbb{R}^{n\times m}$ and $\mM^2\in\mathbb{R}^{m\times p}$ can be naively computed in $\mathcal{O}(nmp)$.
\end{lemma}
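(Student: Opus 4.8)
The plan is to prove this by directly counting the arithmetic operations required by the standard (schoolbook) matrix multiplication algorithm, since the claim concerns the \emph{naive} computation. First I would recall the definition of the product: writing $\mM = \mM^1 \mM^2 \in \mathbb{R}^{n \times p}$, each entry is given by $\mM_{ij} = \sum_{k=1}^{m} \mM^1_{ik} \mM^2_{kj}$, for $1 \leq i \leq n$ and $1 \leq j \leq p$. This formula is exactly what the naive algorithm evaluates, so the complexity follows by bounding the cost of evaluating all such entries.

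Next I would count the cost of a single entry. Computing $\mM_{ij}$ requires $m$ scalar multiplications and $m-1$ scalar additions, i.e. $2m - 1 = \mathcal{O}(m)$ arithmetic operations, under the standard unit-cost model where each scalar multiplication and addition is $\mathcal{O}(1)$. Then I would multiply by the number of output entries: there are exactly $n \cdot p$ entries $\mM_{ij}$ to compute, and they can be evaluated independently, so the total work is $n \cdot p \cdot \mathcal{O}(m) = \mathcal{O}(nmp)$. This establishes the stated bound.

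There is no substantial obstacle here, as the result is a routine counting argument; the only point requiring care is to fix the cost model explicitly (unit cost per scalar operation) so that the bound $\mathcal{O}(nmp)$ is unambiguous, and to note that the naive algorithm is precisely the one that evaluates each of the $np$ inner products independently without reusing any intermediate results. I would therefore state the cost model up front and then present the two-step count (per-entry cost followed by number of entries) as above.
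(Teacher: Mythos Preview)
Your proposal is correct and follows essentially the same approach as the paper: both write out the entrywise formula $(\mM^1\mM^2)_{ij} = \sum_k \mM^1_{ik}\mM^2_{kj}$ and count the total number of scalar operations over all triples $(i,j,k)$. Your version is slightly more explicit about the unit-cost model and the per-entry breakdown into multiplications and additions, but the argument is the same.
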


\begin{proof}
	We write down the product:
	$$(\mM^1\mM^2)_{ij} = \sum_k\mM^1_{ik}\mM^2_{kj}$$
	To produce the entire matrix we explore all integers $i=1\dots n$, $j=1 \dots p$ and $k = 1 \dots m$, and in particular any combination of the 3. This implies that we make $nmp$ iterations, from which a time complexity of $\mathcal{O}(nmp)$ follows.
\end{proof}
\begin{lemma}
\label{lemma:proj_comp}
For any matrix $\mM\in\mathbb{R}^{d\times n}$ and vector $\vv\in\mathbb{R}^d$, we can compute the projection of $\vv$ on the subspace spanned by the columns of $\mM$ in $\mathcal{O}(d^2n)$ time. 
\end{lemma}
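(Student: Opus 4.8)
The plan is to evaluate the orthogonal projection via its standard closed form and to bound the running time by splitting the computation into a short sequence of matrix operations, charging each matrix product to \cref{lemma:matmul} and the single matrix inversion to ordinary Gaussian elimination. Throughout I would work under the assumption that the columns of $\mM$ are linearly independent; this is without loss of generality, since otherwise one replaces $\mM$ by a submatrix whose columns form a basis of the same span, which has $r = \rank(\mM) \le \min(n,d)$ columns, leaves $\ColSpan(\mM)$ unchanged, and can itself be extracted within the same $\mathcal{O}(d^2 n)$ budget (e.g.\ by the SVD-based orthonormalization already employed by \tool in \cref{sec:filtering}). In particular the relevant column count never exceeds $d$, and this inequality $r \le d$ is what makes all intermediate costs collapse to $\mathcal{O}(d^2 n)$.

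With independent columns the Gram matrix $\mM^T\mM \in \mathbb{R}^{r \times r}$ is invertible and the projection is
\begin{equation*}
	\proj(\vv, \ColSpan(\mM)) = \mM (\mM^T \mM)^{-1} \mM^T \vv .
\end{equation*}
Evaluating this from right to left, I would charge each step as follows: forming $\mM^T \vv \in \mathbb{R}^{r}$ costs $\mathcal{O}(dr)$ by \cref{lemma:matmul}; assembling the Gram matrix $\mM^T \mM$ costs $\mathcal{O}(r^2 d)$; inverting it (equivalently, solving the $r \times r$ linear system) costs $\mathcal{O}(r^3)$; applying the inverse to the vector $\mM^T \vv$ costs $\mathcal{O}(r^2)$; and the final left-multiplication by $\mM$ costs $\mathcal{O}(dr)$. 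The total is therefore $\mathcal{O}(r^2 d + r^3)$. Using $r \le n$ and $r \le d$ gives $r^2 d \le r n d \le d^2 n$ and $r^3 \le r^2 n \le d^2 n$, so the whole computation runs in $\mathcal{O}(d^2 n)$, as claimed.

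The only genuinely load-bearing point is the reduction to independent columns together with the resulting bound $r \le d$: without it the cubic inversion term $r^3$ would not be controlled, and indeed it is this term (rather than any single matrix product) that sits tightest against the target bound. Everything else is mechanical bookkeeping layered on top of \cref{lemma:matmul}. I would also remark, as a sanity check matching the way \tool actually uses the projection, that if one instead precomputes the $d \times d$ projector $\mM (\mM^T \mM)^{-1} \mM^T$ once and then applies it to each candidate vector, the dominant term becomes the $\mathcal{O}(d^2 n)$ outer product that forms the projector; the bound is then attained exactly and is amortized across the many span checks performed by the algorithm.
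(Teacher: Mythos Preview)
Your proof is correct but takes a genuinely different route from the paper. The paper implicitly assumes the columns of $\mM$ are orthonormal (as they are in \tool's actual use, where an orthonormal basis is precomputed via SVD; see \cref{sec:filtering}) and simply writes the projection as $\vp = \mM\mM^T\vv$, in index form $\vp_k = \sum_{i,j}\mM_{ki}\mM_{ji}\vv_j$, then counts the $d^2 n$ iterations of the triple loop. You instead handle an arbitrary $\mM$ via the full least-squares formula $\mM(\mM^T\mM)^{-1}\mM^T\vv$, reduce to independent columns, and bound each factor separately using \cref{lemma:matmul} plus a cubic inversion. This buys you a proof of the lemma exactly as stated (``for any matrix''), whereas the paper's argument is really only valid for orthogonal columns; the price is the extra bookkeeping around the Gram matrix and the rank bound $r \le \min(n,d)$, which you use correctly to absorb the $r^3$ term. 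Your closing remark about precomputing the $d\times d$ projector and amortizing over many span checks also matches how the paper actually deploys the lemma in practice.
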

\begin{proof}
Projecting onto the column space of a matrix can be done by projecting the vector onto individual columns and then summing all projected components. We compute this using Einstein notation, while denoting the resulting vector as $\vp\in\mathbb{R}^d$. Then, we obtain:
$$\vp_k = \sum_{i, j}\mM_{ki}\mM_{ji}\vv_j$$
This loop iterates over $i = 1\dots n, j = 1 \dots d, k = 1\dots d$, resulting in a total number of $d^2n$ iterations, which implies a time complexity of $\mathcal{O}(d^2n)$.
\end{proof}
\begin{lemma}
	\label{lemma:transformer_comp}
	For any transformer-based model, which has an embedding dimension $d$, square projection weights of dimension $d\times d$, and an MLP hidden dimension of $d_{\text{MLP}}$ , propagating a sequence of $b$ tokens, takes time of asymptotic complexity $\mathcal{O}(bd^2 + b^2d + bd_{\text{MLP}}d)$.
\end{lemma}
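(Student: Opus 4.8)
The plan is to decompose the forward pass of a single transformer block into its elementary operations---the query/key/value projections, the attention-score computation, the softmax, the value aggregation, the output projection, the two MLP linear maps and their activation, together with the auxiliary LayerNorms and residual additions---bound the cost of each piece separately, and then sum. Every genuine matrix multiplication will be charged using \cref{lemma:matmul}, while the elementwise operations (activations, mask application, softmax normalization, residual sums) will be counted directly as proportional to the number of entries they touch. Since the number of blocks $L$ is a fixed property of the model, it is a constant that is absorbed into the asymptotic bound, so it suffices to analyze one block acting on the residual stream $\mZ_l\in\mathbb{R}^{b\times d}$.

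For the attention sublayer I would argue as follows. Each of the three projections $\mZ_l\mW_l^{Q}$, $\mZ_l\mW_l^{K}$, $\mZ_l\mW_l^{V}$ multiplies a $b\times d$ matrix by a $d\times d$ matrix, so by \cref{lemma:matmul} each costs $\mathcal{O}(bd^2)$, and three of them remain $\mathcal{O}(bd^2)$. The score matrix $\mQ_l\mK_l^T$ is a $(b\times d)(d\times b)$ product costing $\mathcal{O}(b^2 d)$; applying the mask and row-wise softmax touches only the $b^2$ entries, costing $\mathcal{O}(b^2)$, which is dominated. The value aggregation multiplies the $b\times b$ attention matrix by the $b\times d$ value matrix, again $\mathcal{O}(b^2 d)$, and the final output projection is another $d\times d$ map costing $\mathcal{O}(bd^2)$. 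Thus the whole attention sublayer is $\mathcal{O}(bd^2 + b^2 d)$. I would remark that splitting into multiple heads only partitions the same work and does not change this asymptotic.

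For the MLP sublayer, the up-projection is a $(b\times d)(d\times d_{\text{MLP}})$ product and the down-projection is a $(b\times d_{\text{MLP}})(d_{\text{MLP}}\times d)$ product, each costing $\mathcal{O}(b\,d_{\text{MLP}}\,d)$ by \cref{lemma:matmul}, while the elementwise activation costs only $\mathcal{O}(b\,d_{\text{MLP}})$. The remaining bookkeeping operations---the initial embedding lookup, the LayerNorms, and the residual additions---each touch $\mathcal{O}(bd)$ entries and are therefore lower-order. Summing the attention and MLP contributions gives $\mathcal{O}(bd^2 + b^2 d + b\,d_{\text{MLP}}\,d)$, and multiplying by the constant $L$ leaves the bound unchanged, which is exactly the claim. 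There is no real mathematical obstacle here; the only care required is the bookkeeping step of verifying that every cheaper term ($\mathcal{O}(b^2)$, $\mathcal{O}(b\,d_{\text{MLP}})$, $\mathcal{O}(bd)$) is genuinely dominated by one of the three named terms, and being explicit that $L$ (and the number of heads) are treated as constants of the fixed model rather than as growing parameters.
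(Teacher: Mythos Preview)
Your proposal is correct and follows essentially the same decomposition as the paper: bound the Q/K/V projections at $\mathcal{O}(bd^2)$ via \cref{lemma:matmul}, the score product $\mQ\mK^T$ and the value aggregation at $\mathcal{O}(b^2d)$, note the softmax is $\mathcal{O}(b^2)$ and dominated, and charge the MLP at $\mathcal{O}(b\,d_{\text{MLP}}\,d)$. You are actually a bit more thorough than the paper---you explicitly account for the output projection, LayerNorms, residuals, multi-head splitting, and the absorption of $L$ as a constant---but the route and the key lemma are identical.
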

\begin{proof}
	We follow the notation defined in \cref{sec:transformers}. The embedding representation of the sequence is denoted as $\vz\in\mathbb{R}^{b\times d}$. We then obtain the query, key and value vectors $\mQ$, $\mK$, $\mV$ by multiplying with the weight matrices $\mW_1^Q, \mW_1^K, \mW_1^V\in\mathbb{R}^{d\times d}$. According to \cref{lemma:matmul}, we can accomplish this in a time of $\mathcal{O}(bd^2)$.
	
	Computing the attention scores is dominated by computing $\mQ\mK^T$, which by applying \cref{lemma:matmul}, can be done in $\mathcal{O}(b^2d)$. 
	
	As a final step to the self-attention component, we compute the multiplication of the scores $A = \softmax(\mM \odot \frac{\mQ\mK^T}{\sqrt{d}}) \in \mathbb{R}^{b\times b}$ and $\mV \in \mathbb{R}^{b\times d}$ in time $\mathcal{O}(b^2d)$. We note that computing the row-wise softmax operations can be amortized and take a total of $\mathcal{O}(b^2)$ time, which is dominated by $\mathcal{O}(b^2d)$. The total computation up until this point can, therefore, be done in $\mathcal{O}(bd^2 + b^2d)$.
	
	To produce the output of the transformer block, we pass the self-attention through an MLP layer, which can be represented by a set of simple matrix multiplications (of the same complexity). Therefore, applying \cref{lemma:matmul} once again, this step requires time $\mathcal{O}(bd_{\text{MLP}}d)$. 
	
	We can sum up all components to obtain a total complexity of $\mathcal{O}(bd^2 + b^2d + bd_{\text{MLP}}d)$. If we reasonably assume that $d_{\text{MLP}}=\mathcal{O}(d)$, then the complexity can be simplified to $\mathcal{O}(bd^2 + b^2d)$.
\end{proof}

Having explored in-depth each smaller component of the algorithm, we can now determine the complexity of \tool. We explore 3 instances of our algorithm, namely separating decoder- and encoder-based models, while also differentiating between absolute positional encodings and RoPE. We summarize our findings in a single theorem:

\begin{theorem}\label{theorem:complexity}
	By considering a training iteration of batch of size $B$ with the longest sentence being of length $P$, where the sentences are represented as a sequence of tokens from a vocabulary of size $V$, \tool can reconstruct the input with an asymptotic time complexity of:
	\begin{enumerate}
		\item For decoder-based models:
		\begin{enumerate}
			\item If the model applies positional embeddings before layer 0 - $\mathcal{O}(P^2BVd^2+d^3+P^3B^3d^2)$
			\item If the model applies positional embeddings after the first projection, i.e. RoPE - $\mathcal{O}(PBVd^2+d^3+P^4B^3d^2)$
		\end{enumerate}
		\item For encoder-only models - $\mathcal{O}(P^2BVd^2+d^3+B^PP^2d^2)$ 
	\end{enumerate}
\end{theorem}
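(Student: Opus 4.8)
The plan is to split the running time of \tool into three phases — a one-time preprocessing of the gradient column spaces, the token-recovery phase of \cref{alg:tokens}, and the sequence-recovery phase (the greedy loop of \cref{alg:attack_decoder} for decoders, exhaustive enumeration for encoders) — bound each phase separately using \cref{lemma:matmul,lemma:proj_comp,lemma:transformer_comp}, and then collect the dominant terms. The three listed complexities share the same skeleton and differ only in how the number of span checks grows, so the technical core is a careful count of how many candidate embeddings and candidate (partial) sequences are ever fed to the projection subroutine.

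First I would dispatch the shared costs. For each of the two relevant layers we precompute an orthonormal basis of $\ColSpan(\frac{\partial \mathcal{L}}{\partial \mW^Q_l})$ from an SVD of a $d\times d$ matrix, which contributes the $\mathcal{O}(d^3)$ term. By \cref{lemma:proj_comp}, every span check then reduces to a single projection of cost $\mathcal{O}(d^2 r)$, where $r=\rank(\frac{\partial \mathcal{L}}{\partial \mW^Q_l})\le b\le PB$ since $b=\sum_j n_j\le PB$; hence each check costs $\mathcal{O}(d^2 PB)$. For token recovery I would count candidates directly from \cref{alg:tokens}: with absolute positional embeddings it enumerates all $(v,i)\in[V]\times[P]$, i.e. $PV$ candidates, each needing an $\mathcal{O}(d)$ evaluation of $f^0$ plus one projection, giving $\mathcal{O}(PV\cdot d^2 PB)=\mathcal{O}(P^2 BV d^2)$; for position-independent (RoPE) embeddings only the $V$ distinct tokens are checked, yielding $\mathcal{O}(PBVd^2)$. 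This already produces the first term of each case.

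The heart of the argument is the sequence-recovery count. In the decoder loop I would bound the number of candidate prefixes carried between iterations by the total number of recovered token–position pairs, $\mathcal{O}(PB)$, and the number of extension tokens per prefix by the relevant token set. The key distinction is localization: with absolute encodings the extension set $\mathcal{T}^*_i$ is tied to position $i$ and has size $\le B$, whereas under RoPE we must extend by the entire recovered set $\mathcal{T}^*=\bigcup_i \mathcal{T}^*_i$ of size $\le PB$, and this single factor of $P$ is exactly what separates case (a) from case (b). Multiplying the per-iteration candidate count by the $\le P$ iterations gives a total of $\mathcal{O}(P^2 B^2)$ span checks for (a) and $\mathcal{O}(P^3 B^2)$ for (b); charging each the $\mathcal{O}(d^2 PB)$ projection (and verifying via \cref{lemma:transformer_comp} that the one-block forward pass $\mathcal{O}(Pd^2+P^2 d)$ over $\le P$ tokens is dominated once $dB\ge P$) yields the $P^3 B^3 d^2$ and $P^4 B^3 d^2$ terms. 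For encoders the greedy shortcut fails, so I would instead bound the full product set $\bc{S}=\mathcal{T}^*_1\times\cdots\times\mathcal{T}^*_P$ by $B^P$ sequences and charge each a full forward pass plus span checks at all $P$ positions, producing the $B^P P^2 d^2$ term.

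The hard part will be making the candidate-count bounds rigorous rather than leaning on the ``almost surely only correct candidates survive'' heuristic behind \cref{theorem:spanineq}: false positives admitted by the numerical threshold $\tau_l$ could in principle inflate both the surviving prefix sets and the per-position extension sets, so I would state these as worst-case structural bounds (at most $B$ true sequences, at most $b\le PB$ recovered token–position pairs in total) and fold the absolute-versus-RoPE localization gap into the extension factor. The remaining work — confirming that $\mathcal{O}(d^3)$ and the forward-pass contributions are dominated by the stated terms under $d_{\text{MLP}}=\mathcal{O}(d)$ and $b<d$ — is a routine term-by-term comparison.
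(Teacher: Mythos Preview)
Your three-phase decomposition (SVD preprocessing contributing $\mathcal{O}(d^3)$, token recovery via \cref{alg:tokens}, sequence recovery), the invocation of \cref{lemma:matmul,lemma:proj_comp,lemma:transformer_comp}, and the token-recovery and encoder counts all match the paper's proof. The only substantive divergence is in the decoder sequence-recovery bookkeeping. The paper explicitly \emph{assumes} that at each greedy step only $\mathcal{O}(B)$ prefixes survive and that $|\mathcal{T}^*_i|=\mathcal{O}(T)$ with $T=\mathcal{O}(B)$ (both stated as empirical observations, not derived), but then charges each length-$n$ candidate the cost of $n$ span checks---one per position of the sequence---so that the sum $\sum_{n=1}^{P} n=\mathcal{O}(P^2)$ supplies the additional factor of $P$. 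You instead charge a single span check per candidate, which is indeed all that \cref{alg:attack_decoder} performs, and recover the missing $P$ by taking the per-step surviving-prefix count to be $\mathcal{O}(PB)$ rather than $\mathcal{O}(B)$.

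That $\mathcal{O}(PB)$ prefix bound is the weak link in your version. The quantity ``total number of recovered token--position pairs'' is $\sum_i |\mathcal{T}^*_i|\le b\le PB$; it does not bound $|\mathcal{S}^*_{i-1}|$ at any single step, and under your own ``at most $B$ true sequences'' structural assumption the natural per-step bound is $|\mathcal{S}^*_i|\le B$. With one span check per candidate and $\mathcal{O}(B)$ prefixes you would in fact obtain the sharper $\mathcal{O}(P^2B^3d^2)$, not the theorem's $P^3B^3d^2$. To land on the published exponent you must either adopt the paper's $n$-checks-per-sequence accounting (a conservative but legitimate upper bound on the work done at step $n$) or state explicitly that you are using a deliberately loose surrogate for the prefix count; as currently phrased, the $\mathcal{O}(PB)$ claim does not follow from the structural facts you list.
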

\begin{proof}
	We separate our algorithm in 2 different parts:
	\begin{enumerate}
		\item Recovering the tokens $\bc{T}^*$ through the span check on $\mW_1^Q$.
		\item Reconstructing the entire sequences $\bc{S}^*$.
	\end{enumerate}
	\paragraph{Token recovery}
	We begin by describing the first step - recovering individual tokens per position. We begin by obtaining $\bc{T}^*$ by taking $\bc{T}^*_i = \{ v\in\bc{V} |f^0(v,i)\in\ColSpan(\frac{\partial \mathcal{L}}{\partial \mW_1^Q})\}$. Because we determine the span check via the projection distance $d_\text{proj}$ of a representation vector $\vz\in\mathbb{R}^d$, and truncated gradient $\frac{\partial \mathcal{L}}{\partial \mW_l^Q} \in \mathbb{R}^{d\times r}$, where $r=\rank(\frac{\partial \mathcal{L}}{\partial \mW_1^Q})$ according to \cref{lemma:proj_comp}, this can be done in time $\mathcal{O}(d^2r)$. The rank of both $\frac{\partial \mathcal{L}}{\partial \mW_1^Q}, \frac{\partial \mathcal{L}}{\partial \mW_2^Q}$ is limited by the total number of tokens $b \in \mathcal{O}(PB)$. As can be inferred by \cref{theorem:spanineq}, the complexity of the spancheck is $\mathcal{O}(PBd^2)$. We perform this for every token in the vocabulary for an additional factor of $V$, making the total complexity $\mathcal{O}(PBVd^2)$. We note that for models which employ positional embeddings before the first transformer block, we repeat this step $P$ times, while we only have to do it once for those that don't. This respectively results in time complexities of $\mathcal{O}(P^2BVd^2)$ and $\mathcal{O}(PBVd^2)$ for recovering individual tokens.
	
	\paragraph{Spancheck complexity}
	In practice, we perform all span checks by performing a Singular Value Decomposition on $\frac{\partial \mathcal{L}}{\partial \mW_1^Q}$ and then applying the projection distance to the right orthonormal. It is a well known result that SVD for a matrix of size $d\times d$ takes time $\mathcal{O}(d^3)$.

	\paragraph{Sequence reconstruction}
	Having recovered the individual tokens, we detail the resulting time complexity of reconstructing the entire sequences. We describe each step for decoder-based models, before proceeding to encoder-based ones.
	
	\begin{itemize}
		\item For decoder-based models we begin by \emph{describing the number of tokens we obtain per position}. If the model applies positional embedding before the first transformer layer, we assume that for each position $i$ we recover a set of tokens $\bc{T}^*_i$, such that $\lvert\bc{T}^*_i\rvert=\mathcal{O}(T)$ for some variable $T$ that depends on our input parameters and setup.  Similarly, for models that do not have positional embeddings before the first transformer layer, we only recover a single set of tokens $\bc{T}^*$ of size $\lvert\bc{T}^*\rvert=\mathcal{O}(PT)$.
		\item We now show the complexity required for performing the \emph{forward pass and span check for a single sequence}. For a sequence length $n$, according to \cref{lemma:transformer_comp} the forward pass takes time $\mathcal{O}(nd^2 + n^2d)$, and the span check per token is of complexity $\mathcal{O}(PBd^2)$ (as per \cref{lemma:proj_comp}). This implies a time complexity of $\mathcal{O}(nPBd^2)$ per sequence for the span check, leading us to an overall complexity of $\mathcal{O}(nPBd^2 + nd^2 + n^2d)$.
		\item Finally, we apply our observations to the \emph{full reconstruction}. We further assume that at each step of the greedy reconstruction, we maintain $\mathcal{O}(B)$ possible sequences, which is usually the case when performing a greedy reconstruction. By repeating the above propagation for every combination per sequence length yields a total runtime of $\mathcal{O}(T\times B\times (nd^2 + n^2d + nPBd^2)) = \mathcal{O}(PTB^2d^2n)$ and $\mathcal{O}(PT\times B\times (nd^2 + n^2d + nPBd^2)) = \mathcal{O}(P^2TB^2d^2n)$ in the cases described in statement 1a) and 1b) respectively.
		\item  \emph{Summing over all lengths} yields a time of $\mathcal{O}(\sum_{n=1}^P PB^3d^2n)=\mathcal{O}(P^3TB^2d^2)$ for the former.  Analogically, for the latter we obtain a complexity of $\mathcal{O}(P^4TB^2d^2)$.
		\item In practice, we observe that $T=\alpha B$ for some factor $\alpha$ between 1 and 10, hence we can assume that $T=\mathcal{O}(B)$. This is a practically correct assumption for a reasonably defined threshold $\tau_1$. This makes the final time complexity for recovering sequences $\mathcal{O}(P^3TB^2d^2) = \mathcal{O}(P^3B^3d^2)$ for case 1a) and  $\mathcal{O}(P^4TB^2d^2) = \mathcal{O}(P^4B^3d^2)$ for 1b).
	\end{itemize}

	On the other hand, in the case of encoder-based models, we need to exhaust all possible token combinations over all positions.
	
	\begin{itemize}
		\item We again assume that for each position $i$ we recover a set of tokens $\bc{T}^*_i$, such that $\lvert\bc{T}^*_i\rvert=\mathcal{O}(T)$.
		\item Because we have to explore all possible combinations, that results in a total number of $\prod_{i=1}^P\lvert\bc{T}^*_i\rvert = \prod_{i=1}^P\mathcal{O}(T)= \mathcal{O}(T^P)$ sequences.
		\item We now show the cost of the span check on a single sequence. We leverage our finding that one such span check takes time $\mathcal{O}(PBd^2)$ (we again substitute that the rank $r=\mathcal{O}(PB)$ and apply \cref{lemma:proj_comp}), meaning all span checks take time $\mathcal{O}(P^2Bd^2)$ per sequence.
		\item Additionally, the forward pass takes $\mathcal{O}(Pd^2 + P^2d)$ time.
		\item Finally, because we reconstruct each sequence separately, we repeat the reconstruction algorithm $\mathcal{O}(B)$ times.  This leads to the complexity for this step - $\mathcal{O}(T^{P}B(P^2Bd^2 + Pd^2 + P^2d)) = \mathcal{O}(T^{P}B^2P^2d^2)$.
		\item As above, we assume $T=\mathcal{O}(B)$, leading us to a \emph{final complexity} of $\mathcal{O}(B^{P}B^2P^2d^2)$ = $\mathcal{O}(B^{P}P^2d^2)$.
	\end{itemize}     
	
	Combining our conclusions for steps 1 and 2 yields the stated complexities for each setting.	
\end{proof}
The key points to highlight is that \tool is polynomial across both sequence length and batch size for decoders, regardless of the type of positional encoding. Meanwhile, for encoders the we observe that \tool is exponential in length and polynomial with high degree in terms of batch size. To this end, the heuristics we described are crucial to significantly reduce the search space.

\subsection{Encoder Algorithm}\label{app:encoders}
\begin{wrapfigure}{R}{0.582\textwidth}
	\begin{minipage}{0.582\textwidth}
		\vspace{-7mm}
		\begin{algorithm}[H]
			\caption{\tool{} for Encoders}
			\label{alg:attack_encoder}
			\begin{algorithmic}[1]
				\Function{AttEnc}{$T,B$,$\frac{\partial \mathcal{L}}{\partial \mW^Q_1},\frac{\partial \mathcal{L}}{\partial \mW^Q_2},V,P, f^{0/1}, \tau_{1/2}$}
					\State $n, \mathcal{T}^*, \mathcal{D}^*\gets$\Call{GetTok}{$\frac{\partial \mathcal{L}}{\partial \mW^Q_1},V, P, f^0, \tau_1$}
					\State $\mathcal{T}^* \gets$\Call{TopBTokens}{$\mathcal{T}^*, \mathcal{D}^*, B$} \label{alg:top_b_app}
					\State $\mathcal{N} \gets \{\}$ \label{algline:njs_st}
					\For{$i \gets 1,\dots,n$}
						\If {EOS $\in \mathcal{T}^*_i$}
							\State$\mathcal{N} \gets \mathcal{N} + \{i+1\}$
							\State$\mathcal{T}^*_i \gets \mathcal{T}^*_i\setminus\{\text{EOS}\}$\label{algline:njs_en}
						\EndIf
					\EndFor
					\For{$i \in\,$\Call{Sort}{$\mathcal{N}$}} \label{algline:sortedlen}
						\State$\mathcal{S}_i \gets \mathcal{T}^*_1 \times \dots \times \mathcal{T}^*_i$ \label{algline:enumarateS}
						\State$\mathcal{S}^\text{corr}_i \gets \{ \vs\in \mathcal{S}_i \;|\; d(f^1_p(\vs),2) < \tau_2. \forall p \in [i]\}$
						\For{$\vs \in \mathcal{S}^\text{corr}_i$}
							\For{$p \gets 1,\dots,i$}
								\State$\mathcal{T}^*_p \gets \mathcal{T}^*_p \setminus \{\vs_p\}$ \label{algline:heuristic}
							\EndFor
						\EndFor
					\EndFor
					\State $S^*_{\text{best}} \gets $\Call{TopUnique}{$\bigcup^{l}_{i=1}S_i, \frac{\partial \mathcal{L}}{\partial \mW^Q_2},B$}\label{algline:bestenc}
					\State\Return $S^*_{\text{best}}$
				\EndFunction
			\end{algorithmic}
		\end{algorithm} 
		\vspace{-3em}
	\end{minipage}
\end{wrapfigure}

In this section, we provide pseudocode for \tool when applied to encoder-based LLMs. We provide it in \cref{alg:attack_encoder}, where we first find the set of client sequence lengths $n_j$ and store them in $\bc{N}$ (\cref{algline:njs_st} to \cref{algline:njs_en}). We then go through the $n_j$s from the smallest to the largest (\cref{algline:sortedlen}), enumerating all possible sequences $\vs\in\bc{S}_i$ of length $i$ (\cref{algline:enumarateS}). Importantly, when a correct sentence $\vs \in \mathcal{S}^\text{corr}_i$ is found its tokens are removed for the token sets $\mathcal{T}^*_i$ (\cref{algline:heuristic}). Finally, we return the deduplicated best reconstructions across different sequence lengths $\bc{S}^*_\text{best}$ (\cref{algline:bestenc}). We note that when $\bc{S}_i$ is larger than 10M combinations, we sample 10M random combinations from it instead.

\section{Additional experimental details}
\label{app:exp_details}
In this section we describe additional details regarding the experimental setup for \tool.

\paragraph{Models}
In particular, we explore its feasibility on the base variant GPT-2\textsubscript{BASE}, featuring a 12-layer transformer with a 768-dimensional hidden state, 12 attention heads and a feed-forward filter of size 3072. Additionally, we examine the effects of model size by considering GPT-2\textsubscript{LARGE}, which contains 36 layers, a 1280-dimensional hidden state, 20 attention heads, and a feed-forward filter of size 5120. To demonstrate the versatility of our approach, we also evaluate our attack in a next-token prediction context on the GPT-2\textsubscript{BASE} model. We further demonstrate the efficacy of our attack on LLaMa 2-7B\cite{touvron2023llama} and BERT\textsubscript{BASE}\cite{devlin2018bert} to illustrate our performance on a state-of-the-art decoder and an encoder-only model, respectively. All models were sourced from HuggingFace\cite{wolf2019huggingface} in their pre-trained form, following standard practice in language modeling research\cite{minaee2021deep}. Full specification details for all models can be found in \cref{table:models}.
\begin{table*}[t]\centering
	
	\caption{Specifications of models that were used in our work.} \label{table:models}
	\newcommand{\twocol}[1]{\multicolumn{2}{c}{#1}}
	\newcommand{\threecol}[1]{\multicolumn{3}{c}{#1}}
	\newcommand{\fivecol}[1]{\multicolumn{5}{c}{#1}}
	\newcommand{\ninecol}[1]{\multicolumn{9}{c}{#1}}
	
	\newcommand{\bsz}{Batch Size~}
	\newcommand{\certified}{{CR(\%)}}
	
	\renewcommand{\arraystretch}{1.2}
	
	\newcommand{\ccellt}[2]{\colorbox{#1}{\makebox(20,8){{#2}}}}
	\newcommand{\ccellc}[2]{\colorbox{#1}{\makebox(8,8){{#2}}}}
	\newcommand{\ccells}[2]{\colorbox{#1}{\makebox(55,8){{#2}}}}
	
	\newcommand{\temp}[1]{\textcolor{red}{#1}}
	\newcommand{\noopcite}[1]{} 
	
	\newcommand{\skiplen}{0.01\linewidth} 
	\newcommand{\rlen}{0.01\linewidth} 
	\newcolumntype{R}{>{$}r<{$}}
	\resizebox{\linewidth}{!}{
		\begin{tabular}{@{}l cccccccc} \toprule

			Model&Type&No. layers & $d$ & No. heads & Feed-forward size & $V$ &Positional embedding&No. Parameters \\
			\midrule
			
			GPT-2\textsubscript{BASE}&Decoder&12&768&12&3072&50,257&Absolute&137M\\
			GPT-2\textsubscript{LARGE}&Decoder&36&1280&20&5,120&50,257&Absolute&812M\\
			LLaMa-2 (7B)&Decoder&32&4,096&32&11,008&32,000&RoPE&6.74B\\
			LLaMa-3.1 (8B)&Decoder&32&4,096&32&14,336&128,256&RoPE&8.03B\\
			LLaMa-3.1 (70B)&Decoder&80&8,192&64&28,672&128,256&RoPE&70.6B\\
			BERT\textsubscript{BASE}&Encoder&12&768&12&3072&30,522&Absolute&110M\\
		
			\bottomrule
		\end{tabular}
	}
	
\end{table*}

\paragraph{Datasets}
In our evaluation, we utilize three binary classification datasets with varying sentence lengths, namely the Corpus of Linguistic Acceptability (\textbf{CoLA})\cite{warstadt2018neural}, the Stanford Sentiment Treebank (\textbf{SST-2})\cite{socher2013recursive}, which are part of the \textbf{GLUE} benchmark\cite{wang2019glue}, as well as the \textbf{Rotten Tomatoes}\cite{Pang+Lee:05a} sentiment analysis dataset. While our algorithm is data-independent, previous studies have indicated that text size can affect reconstructability\cite{lamp, dlg}. We demonstrate robustness to this factor by selecting the aforementioned datasets, with CoLA featuring text typically ranging from 4 to 9 words, SST-2 - from 4 to 13 words, and Rotten Tomatoes - from 10 to 27 words. We note that the binary classification setting has been shown to be a less vulnerable than next-token prediction which can be attacked via label reconstruction attacks~\citep{flat}, however, we conduct an additional experiment to substantiate the claim in this setting. Finally, to showcase \tool's capability to handle arbitrarily long sequences, we leverage the \textbf{European Court of Human Rights (ECHR)}\cite{chalkidis-etal-2019-neural} dataset which includes sentences that are \emph{over 1000 words long}, far exceeding the maximum input length of any of the aforementioned models.

\paragraph{Computational requirements}
We implement \tool in PyTorch~\citep{paszke2019pytorch} and run all experiments on a single GPU. Tests on the LLaMa-2 (7B) architecture were performed on NVIDIA A100 Tensor Core GPUs, which boast 40 GB of memory, while all others were ran on NVIDIA L4 GPUs with 24 GB of memory. In practice, less demanding resources may be used, especially for lower batch sizes on BERT and GPT-2\textsubscript{BASE}. In terms of required RAM, we used between 16 GB and 150 GB per experiment, depending on the batch size and model.

\paragraph{Hyperparameter details}
We use a span check acceptance threshold of $\tau_1 = 10^{-5}$ in the first layer, and $\tau_2 = 10^{-3}$ in the second, a rank truncation of $\Delta_b=20$, and for decoder-based models consider at most $10\,000\,000$ proposal sentences per recovered EOS token position. We consider pre-trained models with a randomly initialized classification head using a normal distribution with $\sigma = 10^{-3}$. To manage numerical instabilities within the framework, we tweak the eigenvalue threshold when doing the SVD $\tau_l^\text{rank}$ and decrease with the batch size growing, varying it between $10^{-7}$ and $10^{-9}$.

\subsection{Runtime of experiments}
\label{app:runtime}
We further demonstrate the computational efficiency of \tool. The runtime summary can be found in \cref{table:runtime}. It is notable that for the experiments on BERT, we have a drastic increase in complexity with respect to the batch size and sequence length, as expected from \cref{app:comp_analysis}, which do not affect the baselines as much. That said, we still remain within the same order of magnitude, while achieving significantly better results. 

In contrast, we achieve a significant improvement for decoder-based models. We notice that within the attack's scope for GPT-2, we can reconstruct a batch for less than \emph{8 minutes} for any batch size. It is important to highlight that the runtime decreases for the batch sizes of 64 and 128 because we are more often than not unable to recover any tokens due to the embedding dimension limitation described in \cref{sec:limitations}.
\begin{table*}[t]\centering
	
	\caption{Total runtime for all main experiments given in hours. Fields containing N/A represent experiments that were not run.} \label{table:runtime}
	\newcommand{\twocol}[1]{\multicolumn{2}{c}{#1}}
	\newcommand{\threecol}[1]{\multicolumn{3}{c}{#1}}
	\newcommand{\fivecol}[1]{\multicolumn{5}{c}{#1}}
	\newcommand{\ninecol}[1]{\multicolumn{9}{c}{#1}}
	
	\newcommand{\bsz}{Batch Size~}
	\newcommand{\certified}{{CR(\%)}}
	
	\renewcommand{\arraystretch}{1.2}
	
	\newcommand{\ccellt}[2]{\colorbox{#1}{\makebox(20,8){{#2}}}}
	\newcommand{\ccellc}[2]{\colorbox{#1}{\makebox(8,8){{#2}}}}
	\newcommand{\ccells}[2]{\colorbox{#1}{\makebox(55,8){{#2}}}}
	
	\newcommand{\temp}[1]{\textcolor{red}{#1}}
	\newcommand{\noopcite}[1]{} 
	
	\newcommand{\skiplen}{0.01\linewidth} 
	\newcommand{\rlen}{0.01\linewidth} 
	\newcolumntype{R}{>{$}r<{$}}
	\resizebox{\linewidth}{!}{
		\begin{tabular}{@{}cc l Rp{\skiplen} Rp{\skiplen} Rp{\skiplen} Rp{\skiplen} Rp{\skiplen} Rp{\skiplen} Rp{\skiplen} R @{}} \toprule

			&&& $B=1$ && $B=2$ && $B=4$ && $B=8$ &&$B=16$ &&$B=32$ &&$B=64$ &&$B=128$ \\			
\midrule
			\multirow{9}{*}{\rotatebox[origin=c]{90}{GPT-2}}&
			\multirow{3}{*}{CoLA}& TAG&9.4&&9.1&&9.2&&9.5&&N/A&&N/A&&N/A&&N/A\\
			&&LAMP&15.9&&24.2&&40.7&&68.3&&N/A&&N/A&&N/A&&N/A\\
			&&\tool&1.5&&1.1&&1.5&&2.8&&8.0&&8.5&&4.4&&8.2\\ 
			\cmidrule(l{5pt}r{5pt}){2-18}
			&\multirow{3}{*}{SST-2}& TAG&9.5&&9.7&&9.1&&9.7&&N/A&&N/A&&N/A&&N/A\\
			&&LAMP&16.3&&24.2&&39.0&&66.5&&N/A&&N/A&&N/A&&N/A\\
			&&\tool&1.1&&1.4&&1.7&&3.6&&8.8&&13.7&&8.3&&7.1\\

			\cmidrule(l{5pt}r{5pt}){2-18}  
			&\multirow{3}{*}{\vtop{\hbox{\strut ~~Rotten}\hbox{\strut Tomatoes}}}& TAG&8.6&&8.8&&9.2&&9.7&&N/A&&N/A&&N/A&&N/A\\
			&&LAMP&16.2&&24.2&&37.9&&67.4&&N/A&&N/A&&N/A&&N/A\\
			&&\tool&1.8&&1.7&&2.5&&3.5&&8.5&&10.2&&1.4&&1.9\\

			\midrule
			\multirow{9}{*}{\rotatebox[origin=c]{90}{BERT}}&
			\multirow{3}{*}{CoLA}& TAG&6.1&&6.3&&9.2&&11.6&&N/A&&N/A&&N/A&&N/A\\
			&&LAMP&11.0&&26.4&&42.6&&85.2&&N/A&&N/A&&N/A&&N/A\\
			&&\tool&0.1&&0.1&&1.3&&19.4&&N/A&&N/A&&N/A&&N/A\\ 
			\cmidrule(l{5pt}r{5pt}){2-18}
			
			&\multirow{3}{*}{SST-2}& TAG&9.1&&6.5&&7.9&&11.8&&N/A&&N/A&&N/A&&N/A\\
			&&LAMP&17.1&&25.8&&43.8&&82.8&&N/A&&N/A&&N/A&&N/A\\
			&&\tool&0.1&&0.7&&11.8&&59.1&&N/A&&N/A&&N/A&&N/A\\

			\cmidrule(l{5pt}r{5pt}){2-18}  
			&\multirow{3}{*}{\vtop{\hbox{\strut ~~Rotten}\hbox{\strut Tomatoes}}}& TAG&8.5&&8.6&&8.6&&11.2&&N/A&&N/A&&N/A&&N/A\\
			&&LAMP&17.4&&28.2&&29.8&&83.1&&N/A&&N/A&&N/A&&N/A\\
			&&\tool&0.1&&7.6&&48.9&&195.0&&N/A&&N/A&&N/A&&N/A\\ 
			\midrule
			
			\multirow{3}{*}{\rotatebox[origin=c]{90}{\hspace{-1em}LLaMA-2}}&
			CoLA&\tool&7.7&&8.6&&8.2&&9.9&&12.6&&21.2&&41.2&&160.0\\ 
			\cmidrule(l{5pt}r{5pt}){2-18}
			
			&SST-2&\tool&7.8&&7.7&&10.0&&12.5&&19.1&&45.9&&74.3&&257.7\\ 
			
			\cmidrule(l{5pt}r{5pt}){2-18}
			&RT&\tool&7.6&&9.1&&10.7&&15.6&&26.0&&39.5&&112.2&&523.1\\ 
			\bottomrule
		\end{tabular}
	}
	
\end{table*}

\subsection{Rank restriction ablation study}
\label{app:rank_ablation}
In section \cref{sec:seq_rec}, we described that for full-rank matrices we attempt a best-effort reconstruction by artificially restricting the rank by a threshold $\tilde{b}$. Here we demonstrate the effect of this component by showing the performance of \tool without attempting to recover any part of the sentence. Any time we observe a sample with full-rank gradients at either the first or second layers, we immediately fail and report a score of 0. The results can be seen in \cref{table:rank_ablation}. 

\begin{table*}[t]\centering
	
	\caption{Ablation study on GPT-2\textsubscript{BASE} with and without the rank threshold heuristic. We report scores of 0 for any example that has full-rank gradients. R-1 and R-2 denote the ROUGE-1 and ROUGE-2 scores respectively.} \label{table:rank_ablation}
	\newcommand{\twocol}[1]{\multicolumn{2}{c}{#1}}
	\newcommand{\threecol}[1]{\multicolumn{3}{c}{#1}}
	\newcommand{\fivecol}[1]{\multicolumn{5}{c}{#1}}
	\newcommand{\ninecol}[1]{\multicolumn{9}{c}{#1}}
	
	\newcommand{\bsz}{Batch Size~}
	\newcommand{\certified}{{CR(\%)}}
	
	\renewcommand{\arraystretch}{1.2}
	
	\newcommand{\ccellt}[2]{\colorbox{#1}{\makebox(20,8){{#2}}}}
	\newcommand{\ccellc}[2]{\colorbox{#1}{\makebox(8,8){{#2}}}}
	\newcommand{\ccells}[2]{\colorbox{#1}{\makebox(55,8){{#2}}}}
	
	\newcommand{\temp}[1]{\textcolor{red}{#1}}
	\newcommand{\noopcite}[1]{} 
	
	\newcommand{\skiplen}{0.01\linewidth} 
	\newcommand{\rlen}{0.01\linewidth} 
	\newcolumntype{R}{>{$}r<{$}}
	\resizebox{\linewidth}{!}{
		\begin{tabular}{@{}c l RR p{\skiplen}  RR p{\skiplen} RR p{\skiplen}  RR @{}} \toprule

			&& \twocol{$B=16$} && \twocol{$B=32$} && \twocol{$B=64$} && \twocol{$B=128$} \\
			
			\cmidrule(l{5pt}r{5pt}){3-4} \cmidrule(l{5pt}r{5pt}){6-7} \cmidrule(l{5pt}r{5pt}){9-10} \cmidrule(l{5pt}r{5pt}){12-13} 
			
			&& \multicolumn{1}{c}{\text{R-1}} & \multicolumn{1}{c}{\text{R-2}}  &&\multicolumn{1}{c}{\text{R-1}} & \multicolumn{1}{c}{\text{R-2}}  && \multicolumn{1}{c}{\text{R-1}} & \multicolumn{1}{c}{\text{R-2}}  && \multicolumn{1}{c}{\text{R-1}} & \multicolumn{1}{c}{\text{R-2}}  \\
			\midrule
			\multirow{2}{*}{CoLA}
			
			&With cutoff & \mathbf{100.0\pm0.0} & \mathbf{100.0\pm0.0}  && \mathbf{100.0\pm0.0} & \mathbf{100.0\pm0.0} && \mathbf{100.0\pm0.0} & \mathbf{100.0\pm0.0} && \mathbf{30.3\pm1.0} & \mathbf{14.6\pm0.9} \\
			
			&No cutoff & \mathbf{100.0\pm0.0} & \mathbf{100.0\pm0.0}  && \mathbf{100.0\pm0.0} & \mathbf{100.0\pm0.0} && \mathbf{100.0\pm0.0} & \mathbf{100.0\pm0.0} && 0.0\pm0.0 & 0.0\pm0.0 \\
			
			\midrule
			\multirow{2}{*}{SST-2}&
			
			With cutoff &  \mathbf{100.0\pm0.0} & \mathbf{94.6\pm1.1} && \mathbf{100.0^{+0.0}_{-0.1}} & \mathbf{93.4\pm1.0} && \mathbf{92.9\pm3.0} & \mathbf{85.0\pm3.5} && \mathbf{13.7\pm1.4} & \mathbf{4.3\pm0.5} \\
			
			& No cutoff &  \mathbf{100.0\pm0.0} & \mathbf{94.6\pm1.1} && \mathbf{100.0^{+0.0}_{-0.1}} & \mathbf{93.4\pm1.0} && 75.0\pm8.7 & 69.7\pm8.2 && 0.0\pm0.0 & 0.0\pm0.0 \\
			
			\midrule
			\multirow{2}{*}{\vtop{\hbox{\strut ~~Rotten}\hbox{\strut Tomatoes}}} 
			
			&With cutoff & \mathbf{100.0\pm0.0} & \mathbf{99.9^{+0.1}_{-0.3}} && \mathbf{98.0\pm1.7} & \mathbf{97.8\pm1.8} && \mathbf{2.8\pm1.1} & \mathbf{1.1\pm0.4} && 0.0\pm0.0 & 0.0\pm0.0 \\
			
			&No cutoff & \mathbf{100.0\pm0.0} & \mathbf{99.9^{+0.1}_{-0.3}} && 93.9\pm4.8 & 93.8\pm4.8 && 0.0\pm0.0 & 0.0\pm0.0 && 0.0\pm0.0 & 0.0\pm0.0 \\
			
			\bottomrule
		\end{tabular}
	}
	
\end{table*}

\subsection{Example reconstructions}
\label{app:examples}
In \cref{tab:examples} we show sample reconstructions between the best-performing baseline LAMP\textunderscore{Cos} for a batch size $B=1$. We note that \tool achieves exact reconstruction, while LAMP\textunderscore{Cos} can predict less than half of the sentence correctly.
\begin{table}[!t]
    \centering
    \caption{Example comparison between DAGER and LAMP\textsubscript{Cos} for random samples from different datasets, reconstructed at batch size 1. We note that LAMP\textsubscript{Cos} was selected, as it was the best-performing baseline model for a batch size of 1.}
    \label{tab:examples}

    \newcommand{\cy}[1]{\colorbox{yellow}{#1}}
    \newcommand{\cg}[1]{\colorbox{green}{#1}}
    \resizebox{0.8\linewidth}{!}{
    \begin{tabular}{cll}
        \hline
        &&Sequence\\
        \hline
        \multirow{3}{*}{CoLA}&
        \textcolor{teal}{Reference}&\textcolor{teal}{Sarah devoured the cakes in the kitchen last night.}\\
        &DAGER&\colorbox{green}{Sarah devoured the cakes in the kitchen last night.}\\
        &LAMP\textsubscript{Cos}&\cg{Sarah} imaginary even \cy{kitchen} dev \cy{devoured} \cy{cakes} \cg{last night}\\
        \cline{2-3}
        \multirow{3}{*}{SST-2}&
        \textcolor{teal}{Reference}&\textcolor{teal}{a caper that's neither original nor terribly funny}\\
        &DAGER&\colorbox{green}{a caper that's neither original nor terribly funny}\\
        &LAMP\textsubscript{Cos}&\cg{a} \cg{that's neither} an perennRe \cg{nor terribly funny}\\
        \cline{2-3}
        \multirow{3}{*}{\makecell{Rotten \\Tomatoes}}&
        \textcolor{teal}{Reference}&\textcolor{teal}{plays like the old disease-of-the-week small-screen melodramas.}\\
        &DAGER&\colorbox{green}{plays like the old disease-of-the-week small-screen melodramas.}\\
        &LAMP\textsubscript{Cos}&\cg{plays} it \cg{like the old} \cy{screen}impactnorm \cy{disease}. \cg{small-screen}\cy{like} \cg{melodramas.}\\
        \hline
    \end{tabular}
    }
    
\end{table}

\subsection{\tool under differential privacy}
\label{app:dp}
In this section, we show how \tool performs under a defended setting, in particular by adding random Gaussian noise with variance $\sigma^2$ to all gradients. We explore the range $\sigma\in[10^{-5}, 5\times 10^{-4}]$, as for any $\sigma\geq10^{-3}$, the sentiment prediction accuracy of the converged model drops to below 80\% from $>87\%$. We apply \tool on the GPT-2 model for the Rotten Tomatoes dataset at $B=1$. Due to the highly random nature of this type of defense, we cannot simply filter the sequences by measuring the single span check distance at layer $l=2$. Instead, we utilise that further layers $l>3$ retain the same property that the input embeddings only depend on previous tokens, and measure the average $\bar{d} = \sum_{l=2}^{L_{DP}} d(f^{l-1}_i(s), l)$ for a certain number of layers $L_{DP}$, which we optimise as a hyperparameter. Further, because any noise will make the gradient updates full-rank, for computation purposes we set a constant rank of $r=100$, which is much higher than the length of any sentence. While it has been shown \citep{dp} that differencial privacy provides provable guarantees for protecting privacy, we provide promising initial results, as seen in \cref{tab:dp}. We believe that there are numerous improvements one could make, but leave these for future work.
\begin{table}[!t]
    \centering
    \caption{Experiments on the differential privacy setting under Gaussian noise on the GPT-2 model with a batch size of 1 on the Rotten Tomatoes dataset. R-1 and R-2 denote the ROUGE-1 and ROUGE-2 scores respectively.}
    \label{tab:dp}
    \resizebox{0.95\linewidth}{!}{
    \begin{tabular}{ccc ccc ccc cc}
        \hline
         \multicolumn{2}{c}{$\bm{\sigma=10^{-5}}$}&&\multicolumn{2}{c}{$\bm{\sigma=5\times10^{-5}}$}&&\multicolumn{2}{c}{$\bm{\sigma=10^{-4}}$}&&\multicolumn{2}{c}{$\bm{\sigma=5\times10^{-4}}$}\\
         \cline{1-2}
         \cline{4-5}
         \cline{7-8}
         \cline{10-11}
         R-1&R-2&&R-1&R-2&&R-1&R-2&&R-1&R-2\\
         $74.0\pm3.2$&$70.8\pm3.4$&&$46.9\pm4.1$&$32.5\pm4.0$&&$20.7\pm4.4$&$11.6\pm3.4$&&$5.6_{-1.9}^{+2.6}$&$0.9_{-0.7}^{+3.3}$\\
         \hline
         
    \end{tabular}
    }
    \vspace{-6mm}
\end{table}

\subsection{Miscallaneous}
Any other experiments, namely the ones on LLaMA-3 70B, LLaMa-3.1 8B, \tool under LoRA training, or \tool using different loss functions are included in \cref{tab:misc}.
\label{app:misc}
\begin{table}[!t]
    \centering
    \vspace{1cm}
    \caption{Miscallaneous experiments, referenced in the evaluation section. We applied \tool on the Rotten Tomatoes dataset for $B=16$, if not specified otherwise.}
    \label{tab:misc}
    \resizebox{0.95\linewidth}{!}{
    \begin{tabular}{cccccc}
        &\textbf{LLaMa-3 70B ($B=1$)}&\textbf{LLaMa-3.1 8B}&\textbf{Frobenius norm loss}&\textbf{ReLU activation}&\textbf{LoRA ($r=256$)}\\
        \hline
         R-1&$99.9^{+0.1}_{-0.2}$&$99.4^{+0.1}_{-0.3}$&$99.8^{+0.1}_{-0.4}$&$100.0\pm0.0$&$94.8$\\
         R-2&$99.9^{+0.1}_{-0.2}$s&$99.4^{+0.2}_{-0.3}$&$99.8^{+0.1}_{-1.1}$&$99.8^{+0.1}_{-0.3}$&$94.2\pm0.7$\\
    \end{tabular}
    }
    \vspace{-6mm}
\end{table}

\section{Licenses}
\label{app:licenses}
In our work, we use the publicly available datasets CoLA, SST-2, Rotten Tomatoes and ECHR. CoLA is licensed under the MIT license. No public licensing information was found for SST-2 and Rotten Tomatoes. Furthermore, we use ECHR under the Creative Commons Attribution-NonCommercial-ShareAlike
4.0 International (CC BY 4.0) license. For privacy concerns, ECHR has issued a statement of protection of personal data that ensures private data was handled appropriately\footnote{The statement can be found under \href{https://www.echr.coe.int/privacy}{https://www.echr.coe.int/privacy}.}.

In terms of Large Language Model architectures, we use GPT-2 under the MIT license and BERT under the Apache License. All aforementioned licenses permit our use of the underlying assets for the purposes of this paper. We obtained access to LLaMa-2 through the Llama 2 Community License Agreement\footnote{The full license can be found under \href{https://ai.meta.com/llama/license/}{https://ai.meta.com/llama/license/}.} which permits the model's use in commercial and research settings.

Finally, we obtain the code for the LAMP and TAG attacks through the public repository for LAMP which is licensed under the Apache License 2.0.	

\fi
\end{document}